\newtheorem{theorem}{Theorem}
\newtheorem{problem}{Problem}
\newtheorem{definition}{Definition}
\let\oldIEEEkeywords\IEEEkeywords
\def\IEEEkeywords{\oldIEEEkeywords\normalfont\bfseries\ignorespaces}
\titleformat{\section}{\centering\normalfont\scshape}{\thesection}{0em}{.~}
\titleformat{\paragraph}[runin]{\bfseries}{}{-2em}{}[:]
\newcommand{\SampleStyle}[1]{\ensuremath{ \mathcal{#1} }}
\newcommand{\Sample}{\ensuremath{\SampleStyle{D}}}
\newcommand{\IntervalSample}{\ensuremath{\Sample_{unc}}}
\newcommand{\TracesStyle}[1]{\ensuremath{ \mathit{#1} }}
\newcommand{\Traces}{\ensuremath{\TracesStyle{Z}}}
\NewDocumentCommand{\traceStyle}{mO{}O{}}{\ensuremath{\mathit{#1}_{#2}^{#3}}} % \traceStyle{u}[t][k]
\NewDocumentCommand{\trace}{O{}O{}}{\ensuremath{\traceStyle{\zeta}[#1][#2]}} % \trace[t][k]
\NewDocumentCommand{\intervalStyle}{mO{}O{}}{\ensuremath{[\underline{#1}_{#2}^{#3},\overline{#1}_{#2}^{#3}]}} % \intervaltrace[t][k]
\NewDocumentCommand{\intervaltrace}{O{}O{}}{\ensuremath{\intervalStyle{\trace}[#1][#2]}} % \intervaltrace[t][k]
\newcommand{\formula}{\ensuremath{\varphi}}
\DeclareMathOperator{\ltrue}{\mathit{True}}
\DeclareMathOperator{\lfalse}{\mathit{False}}
\DeclareMathOperator*{\biglor}{\bigvee}
\DeclareMathOperator*{\bigland}{\bigwedge}
\DeclareMathOperator{\limplies}{\rightarrow}
\DeclareMathOperator{\luntil}{\mathbf{U}}
\DeclareMathOperator{\leventually}{\mathbf{F}}
\DeclareMathOperator{\lglobally}{\mathbf{G}}
\newcommand{\SAT}{SAT} % Boolean Satisfiability
\newcommand{\SMT}{SMT} % Satisfiabilty Modulo Theories
\newcommand{\OMT}{OMT} % Optimization Modulo Theories
\newcommand{\MaxSMT}{MaxSMT}
\newcommand{\OptSMT}{OptSMT}
\newcommand{\DT}{\ensuremath{\tau}}
\newcommand{\stopCriteria}{\ensuremath{\mathit{stop}}}
\newcommand{\leaf}{\ensuremath{\mathit{leaf}}}
\newcommand{\valFuncPos}[3]{\ensuremath{V({#1},{#2},{#3})}}
\newcommand{\y}[3]{\ensuremath{y_{#1,#2}^{#3}}}
\newcommand{\semanticConst}[2]{\ensuremath{\Phi^{#1}_{#2}}}
\newcommand{\specDepth}{\ensuremath{n}}
\newcommand{\propFormula}{\Phi}
\newcommand{\ObjectiveSample}{\ensuremath{F}}
\newcommand{\ObjectiveOneTraj}{\ensuremath{\Tilde{\ObjectiveSample}}}
\newcommand{\class}{\ensuremath{l}} % label
\newcommand{\posclass}{\ensuremath{+1}}
\newcommand{\negclass}{\ensuremath{-1}}
\newcommand{\model}{\ensuremath{v}} % model assignment
\newcommand{\robustness}{\ensuremath{r}} % robustness margin
\newcommand{\DTree}{\text{Decision-tree}}
\newcommand{\SplitSample}{\text{Split-sample}}
\newcommand{\Optimize}{\text{Infer-formula}}
\newcommand{\OptimizeMaxedSize}{\text{Infer-formula-max}}
\newcommand{\OptimizeSufficientScore}{\text{Infer-formula-min}}
\newcommand{\RobustMaxSMT}{\emph{TLI-UA}}
\newcommand{\RobustMaxSMTDT}{\emph{TLI-UA-DT}}
\newcommand{\SamplingMaxSMT}{\emph{TLI-RS}}
\newcommand{\SamplingMaxSMTDT}{\emph{TLI-RS-DT}}
\newcommand{\minclassification}{\ensuremath{S}}
\newcommand{\minrobustness}{\ensuremath{R}}
\newcommand{\maxiteration}{\ensuremath{N}}
\newcommand{\abs}[1]{\ensuremath{|#1|}}
\newcommand{\satisfies}{\ensuremath{\vDash}}
\newcommand{\real}{\ensuremath{\mathbb{R}}}
\newcommand{\nat}{\ensuremath{\mathbb{N}}}
\begin{document}
\pagenumbering{arabic}
	
	\title{Uncertainty-Aware Signal Temporal Logic Inference}
	
 	\author{
 	    Nasim Baharisangari,
 	    Jean-Raphaël Gaglione,
 	    Daniel Neider,
 	    Ufuk Topcu,
 	    Zhe Xu\thanks{Nasim Baharisangari and Zhe Xu are with the School for Engineering of Matter, Transport and Energy, Arizona State University, Tempe, AZ 85287. Jean-Raphaël Gaglione is with Ecole Polytechnique, Palaiseau, France. Daniel Neider is with Max Planck Institute for Software Systems, Kaiserslautern, Germany. Ufuk Topcu is with the Department of Aerospace Engineering and Engineering Mechanics, and the Oden Institute for Computational Engineering and Sciences, University of Texas, Austin, 201 E 24th St, Austin, TX 78712. Email: {\tt\small $\{$nbaharis, xzhe1$\}$@asu.edu, jr.gaglione@yahoo.fr, neider@mpi-sws.org, utopcu@utexas.edu}.}
 	}

% 	\thanks{Portions of this paper previously appeared as a conference paper: Z. Xu, F. M. Zegers, B. Wu, W. Dixon and U. topcu, Controller Synthesis for Multi-Agent Systems With Intermittent Communication: A Metric temporal Logic Approach, \textit{Proc. Annual Allerton Conference on Communication, Control, and Computing}, 2019.}

	\maketitle
	%\tableofcontents
	
\begin{abstract} 
Temporal logic inference is the process of extracting formal descriptions of system behaviors from data in the form of temporal logic formulas. The existing temporal logic inference methods mostly neglect uncertainties in the data, which results in limited applicability of such methods in real-world deployments. In this paper, we first investigate the uncertainties associated with trajectories of a system and represent such uncertainties in the form of \textit{interval trajectories}. We then propose two \textit{uncertainty-aware} signal temporal logic (STL) inference approaches to classify the undesired behaviors and desired behaviors of a system. Instead of classifying finitely many trajectories, we classify infinitely many trajectories within the interval trajectories. In the first approach, we incorporate robust semantics of STL formulas with respect to an interval trajectory to quantify the margin at which an STL formula is satisfied or violated by the interval trajectory. The second approach relies on the first learning algorithm and exploits the decision tree to infer STL formulas to classify behaviors of a given system. The proposed approaches also work for non-separable data by optimizing the \textit{worst-case robustness} in inferring an STL formula. Finally, we evaluate the performance of the proposed algorithms in two case studies, where the proposed algorithms show reductions in the computation time by up to four orders of magnitude in comparison with the sampling-based baseline algorithms (for a dataset with 800 sampled trajectories in total).     
\end{abstract}     

%---------- Introduction ----------
\section{Introduction}
\label{sec_intro}
%  \subsection{Introduction}   
%Real-world systems have been increasing in complexity. Consequently, we need to be able to interpret their manner precisely \cite{Neider2019}\cite{Baier2008}. Moreover, it is vital to be able to determine whether a certain behavior of a specific system is considered acceptable or not \cite{Kong2017}. 
There is a growing emergence of the artificial intelligence (AI) in different fields, such as traffic prediction and transportation \cite{Essien2019} \cite{Essien2020} \cite{Boukerche2020}, or image \cite{Fujiyoshi2019} and pattern  recognition \cite{Sarker2021} \cite{pattern}.
Competency-awareness is an advantageous capability that can raise the accountability of AI systems \cite{Sintov2020}.
For example, the AI systems should be able to explain their behaviors in an interpretable way.

One option to represent system behaviors is to use formal languages such as linear temporal logic (LTL) \cite{Shvo2020}\cite{Basudhar2008}. LTL resembles natural language which is interpretable by humans, and at the same time preserves the rigor of formal logics. However, LTL is used for analyzing discrete programs; hence, LTL falls short when coping with continuous systems such as mixed-signal circuits in \textit{cyber-physical systems} (CPS) \cite{Raman2015}\cite{Bae2019}\cite{Raman2015}.
Signal temporal logic (STL) is an extension of LTL that tackles this issue.
STL is a temporal logic defined over signals \cite{Asarin2012}\cite{Maler2004}, and branches out LTL in two directions: on top of using atomic predicates, it deploys real-valued variables, and it is defined over time intervals \cite{Budde2018}.

Recently, inferring temporal properties of a system from its trajectories have been in the spotlight.
We can derive such properties by inferring STL formulas.
One way of computing such STL formulas is incorporating \textit{satisfibilty modulo theories} (\SMT{}) solvers such as \textit{Z3 theorem prover} \cite{10.5555/1792734.1792766}.
Such solvers can determine whether an STL formula is satisfiable or not, according to some theories such as \textit{arithmetic theory} or \textit{bit-vector theory} \cite{DeMoura2009}\cite{Clarke2018}.
Most of the frequently used algorithms for STL inference deploy customized templates for a solver to conform to.
These pre-determined templates suffer from applying many limitations on the inference process.
First, a template that complies with the behavior of the system is not easy to handcraft.
Second, the templates can constrain the structure of an inferred formula; hence, the formulas not compatible with the template are removed from the search space \cite{Neider2019}\cite{Bombara2016}\cite{Xu2016}\cite{Xu2015}.

Moreover, the importance of \textit{uncertainties} is well established in several fields such as machine learning (ML) application in weather forecast \cite{Moosavi2021}, or deep learning application in facilitating statistical inference \cite{Malinin2019}. 
%In addition, in other fields such as risk assessment, where there is uncertainty in the presence or existence of risk, there is the need to consider the uncertainty \cite{Jansen2019}. In addition, Hariri \textit{et al.} and her coworkers studied the inherent uncertainty in automated techniques that are dealing with big datasets such as artificial intelligence (AI) including ML, natural language processing (NLP), computational intelligence (CI), etc \cite{Hariri2019}.
Failure to account for the uncertainties in any system can reduce the reliability of the output as \textit{predictions} in AI systems \cite{Hubschneider2019}. Specifically, in AI, uncertainties happen due to different reasons including noise in data or overlap between classes of data \cite{Hubschneider2019}. \textit{Uncertainty quantification} techniques in AI systems have been developed to address this matter \cite{Abdar2020}. Incorporating the quantified \textit{uncertainties} in the process of inferring temporal properties can lead to higher credibility of the predictions. However, most of the current approaches of inferring temporal properties do not account for uncertainties in the inference process. \par
In this paper, we propose two \textit{uncertainty-aware} algorithms in the form of STL formulas for inferring the temporal properties of a system. In the proposed algorithms, the uncertainties associated with trajectories describing the evolution of a system is implemented in the form of \textit{interval trajectories}. The second algorithm relies on the first one and uses decision tree to infer STL formulas. By taking uncertainties into consideration, instead of classifying finitely many trajectories to infer an STL formula that classifies the trajectories, we classify infinitely many trajectories within the interval trajectories (Section \ref{problem formulation}). See Figure \ref{uuu} as an illustration. \par
     \begin{figure}[ht]
    \centering
     \begin{subfigure}{0.23\textwidth}
         \centering
         \includegraphics[scale=0.17]{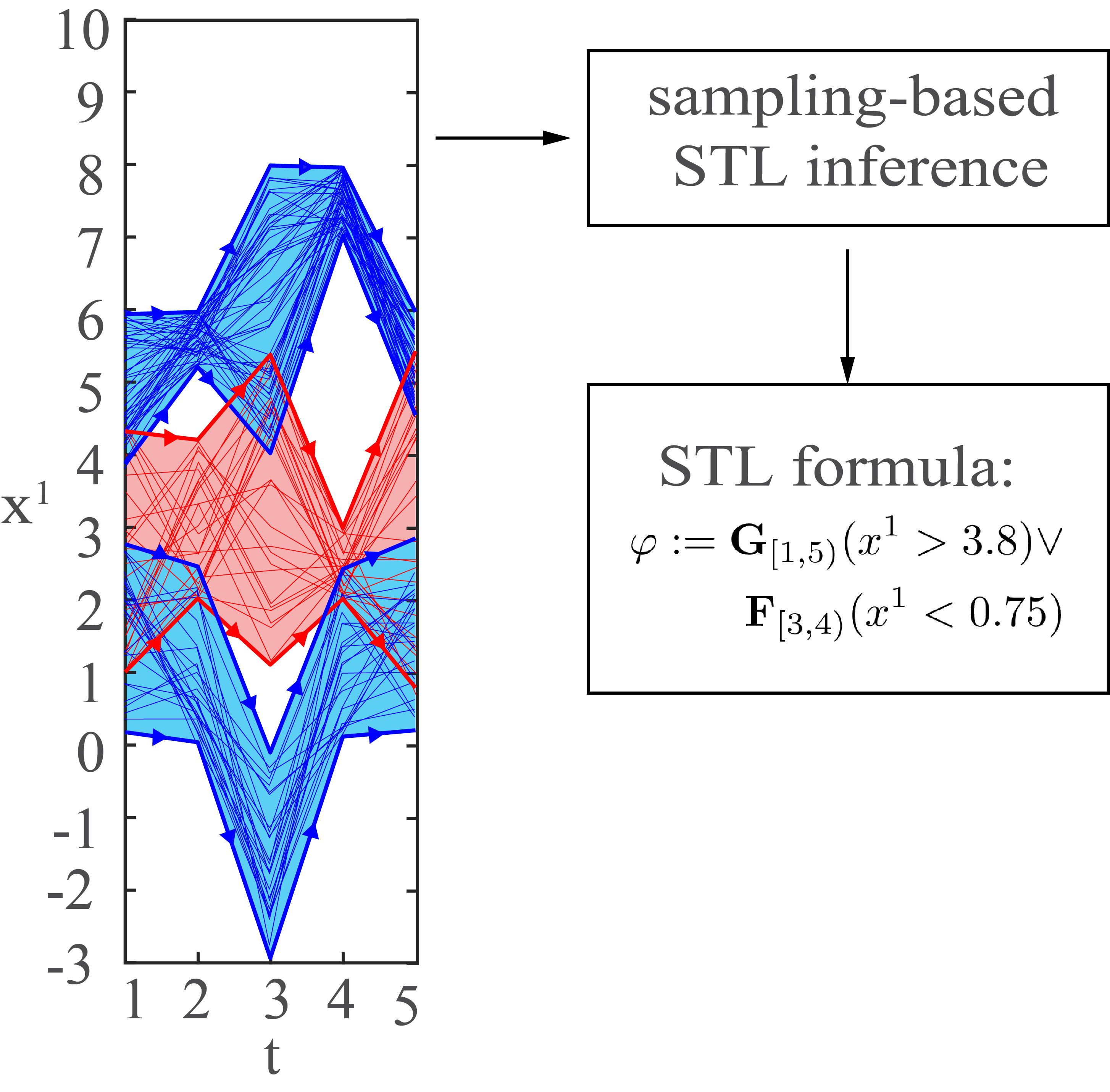}
         \caption{}
         \label{s}
     \end{subfigure}
     \begin{subfigure}{0.23\textwidth}
         \centering
         \includegraphics[scale=0.17]{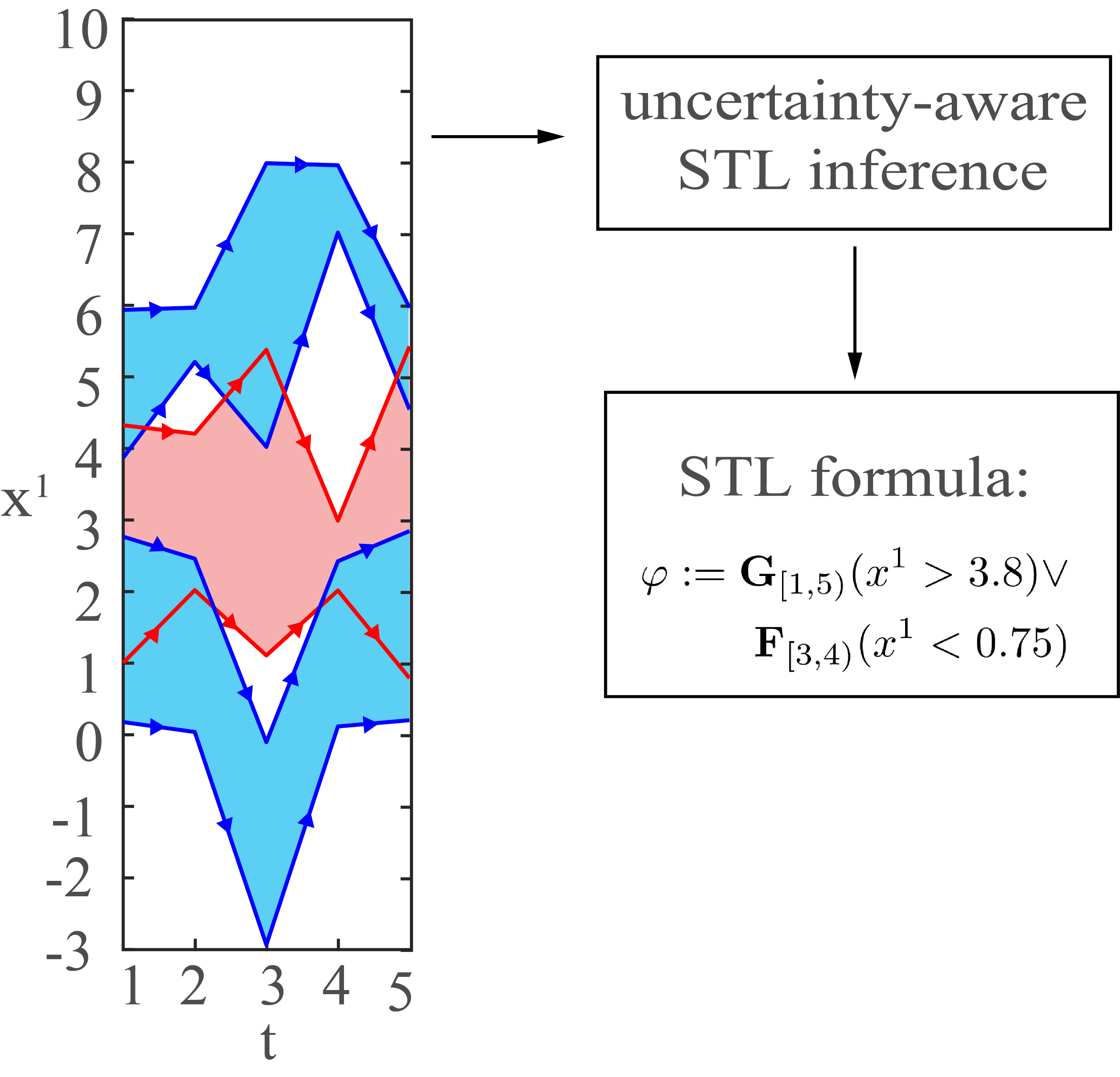}
         \caption{}
         \label{b}
     \end{subfigure}
     \caption{This figure shows a dataset with one sate $x^1$. (a) For inferring an STL formula by sampling-based baseline algorithms, we sample finitely many trajectories from each interval trajectory to infer an STL formula to describe the behavior of the trajectories. (b) By uncertainty-aware STL inference, we infer an STL formula to describe the temporal properties of the infinitely many trajectories within the interval trajectories. In this example, the blue tubes represent the interval trajectories with the desired behavior, and the red tube represents the interval trajectory with the undesired behavior. By both the uncertainty-aware STL and sampling-based baseline algorithms, we infer the STL formula $\lglobally_{[1,5)}(x^1>3.8)\lor{\leventually}_{[3,4)}(x^1<0.75)$ that perfectly classifies the interval trajectories with the desired behavior (blue tube) and the interval trajectories with the undesired behavior (red tube). }\label{uuu}
\end{figure}

Besides, the uncertainty-aware algorithms quantify the margin at which an interval trajectory violates or satisfies an inferred STL formula by incorporating the robust semantics of uncertainty-aware STL formulas (Section \ref{STL formula robustness margin}).\par
In addition, we maximize \textit{worst-case robustness} margin for increasing the robustness of the inferred STL formulas in the presence of uncertainties; thus, we can view our problem as an optimization problem with the objective of maximizing the worst-case robustness margin. Moreover, we introduce a framework in which we employ the optimized robustness to compute the optimal STL formula for non-separable datasets (Section \ref{Classifying non-separable interval trajectories framework}). We conduct experiments to evaluate the performance of the proposed learning algorithms. Our experimental results show that inferring STL formulas using \textit{interval trajectories} reduces the computation time up to four orders of magnitudes in comparison with sampling-based baseline algorithms (for a dataset with 800 sampled trajectories in total) while the inferred formula by uncertainty-aware STL is robust in the presence of uncertainties. Incorporating decision tree in the second proposed algorithm further expedites the process by having the computation time at most $1/88$ of the computation time of the sampling-based baseline algorithms with decision trees (for a dataset with 800 sampled trajectories in total) (Section \ref{experimental evaluation}). \par

\paragraph{Related Work}
\label{related work}
Recently, inferring temporal properties of a system by its temporally evolved trajectories have been in the spotlight \cite{Jin2015}\cite{Jha2017}\cite{Vazquez-Chanlatte2017}. Kong \textit{et. al} developed an algorithm for STL inference from trajectories that show the desired behavior and those having the undesired behavior \cite{Kong2014}. Kyriakis \textit{et. al.} proposed \textit{stochastic temporal logic} (StTL) inference algorithm for characterizing time-varying behaviors of a dynamical stochastic system. Neider \textit{et. al} \cite{Neider2019} proposed a framework for learning temporal properties using LTL. Several other frameworks for inferring temporal properties such as \cite{Bombara2018}, \cite{Nguyen2017}, \cite{Bombara2016}, \cite{Xu2015}, \cite{akazaki2015}, etc have been proposed.\par
In addition, various quantitative semantics have been introduced for temporal logics such as robust semantics. Danze \textit{et. al.} \cite{Maler2010} introduced robust semantics for STL formulas, and Lindemann \textit{et. al.} \cite{Lindemann2019} introduced predicate robustness. Fainekos \textit{et. al.} \cite{Fainekos2009} defined robust semantics for \textit{metric temporal logic} (MTL). Kyriakis \textit{et. al.} incorporated robust semantics to maximize the efficiency of his StTL inference framework \cite{Kyriakis2019}. On the other hand, using decision trees in STL inference have been noteworthy \cite{Nebut2004}\cite{bartocco2014}\cite{Bombara2016}\cite{Bufo2014}\cite{Nguyen2017}. For example, Brunello \textit{et. al.} \cite{Brunello2019} introduced \textit{Interval Temporal Logic Decision Trees}, where the data is delivered to the decision tree in form of intervals.\par

%To develop uncertainty-aware STL, we based our work on the author's previous work. In \cite{Neider2019}, Neider $et. al.$ proposed a framework for learning linear temporal properties. In that framework, essentially a set of LTL formulas are constructed with the purpose of deriving an LTL formula to classify the undesired behaviors and desired behaviors of a system. The prospective LTL formula is inferred by narrowing down the constructed LTL formulas through applying some constraints. In this way, the classifying procedure is converted to a series of satisfibility problems in propositional Boolean logic. Then, the constructed formulas are encoded by a syntax DAG to facilitate the inference of an LTL formula to realize the classification. We expand his algorithm to uncertainty-aware STL in discrete time domain.

%---------- Preliminaries ----------
\section{Preliminaries}
\label{Preliminaries}

In this section, we set up definitions and notations used throughout this paper.

\paragraph{Finite trajectories}
We can describe the state of an underlying system by a vector $x=[x^{1},x^{2},..,x^{n}]$, where $n$ is a non-negative integer (the superscript $i$ in $x^i$ refers to the $i$-th dimension). The domain of $x$ is denoted by $\mathbb{X}=\mathbb{X}^1\times\mathbb{X}^2\times...\times\mathbb{X}^n$, where each $\mathbb{X}^i$ is a subset of $\real$.
The evolution of the underlying system within a finite time horizon is defined in the discrete time domain $\mathbb{T}=\{t_0,t_1,...,t_J\}$, where $J$ is a non-negative integer. We define a finite \textit{trajectory} describing the evolution of the underlying system as a function $\trace:\mathbb{T}\rightarrow\mathbb{X}$. We use $\trace[j]\triangleq{x(t_j)}$ to denote the value of $\trace$ at time-step $t_j$.\par
\paragraph{Intervals and interval trajectories}
An \textit{interval}, denoted by $\intervalStyle{a}$,
is defined as $\intervalStyle{a}:= \left\{ a\in\real^n \middle| \underline{a}^i \leq a^i \leq \overline{a}^i, i=1,..,n \right\} $, where $\underline{a},\overline{a}\in\real^n$, and $\underline{a}^i\leq\overline{a}^i$ holds true for all $i$. The superscript $i$ refers to the $i$-th dimension. For the purpose of this work, we introduce
\textit{interval trajectories}. We define an \textit{interval trajectory} $\intervaltrace$ as a set of trajectories such that for any $\trace\in\intervaltrace$, we have $\trace[j]\in \intervaltrace[j]$ for all $t_j\in\mathbb{T}$ \cite{Xu2021}. We know that the time length of a trajectory $\zeta\in\intervalStyle{\zeta}$ is equal to the time length of an interval trajectory $\intervalStyle{\zeta}$; thus, we can denote the time length of an interval trajectory $\intervalStyle{\zeta}$ with $|\zeta|$.
\section{Signal temporal logic and robust semantics of interval trajectories}

\subsection{~Signal Temporal Logic}
\label{signal temporal logic}
We first briefly review the signal temporal logic (STL).
We start with the Boolean semantics of STL.
The domain $\mathbb{B}=\{True,False\}$ is the Boolean domain.
Moreover, we introduce a set $\varPi=\{\pi_{1},\pi
_{2},\dots,\pi_{n}\}$ which is a set of predefined \textit{atomic predicates}.
%as well as a set $\mathcal{C} = \{ c_1, c_2, \dots \}$ of \textit{constants}.
Each of these predicates can hold values \textit{True} or \textit{False}.
%and \textit{signals} $\sigma$ holds values in $\real$.
The syntax of STL is defined recursively as follows.
\begin{align*}
    \formula &:=
    \top
    \mid \pi
    \mid \lnot\formula
    \mid \formula_{1}\land\formula_{2}
    \mid \formula_{1}\lor\formula_{2}
    \mid \formula_{1}\luntil_{I}\formula_{2}
    % \\
    % \pi &:= \sigma_1 < \sigma_2
    % %\mid \sigma_1 > \sigma_2
    % \\
    % \sigma &:= x^i
    % \mid c_i
    % \mid \sigma_1 + \sigma_2
    % \mid \sigma_1 - \sigma_2
    % \mid \sigma_1 \cdot \sigma_2
\end{align*}

\noindent where $\top$ stands for the Boolean constant \textit{True},
$\pi$ is an atomic predicate in the form of an inequality $f(x) > 0$ where $f$ is some real-valued function.
$\lnot$ (negation), $\land$ (conjunction), $\lor$ (disjunction) 
are standard Boolean connectives, and ``$\luntil$'' is the temporal operator ``until''.
We add syntactic sugar, and introduce the temporal operators ``$\leventually$'' and ``$\lglobally$'' representing ``{eventually}'' and ``always'', respectively.
$I$ is a time interval of the form $I=[a,b)$, where $\textcolor{black}{a < b}$, and they are non-negative integers. We call the set containing all the mentioned operators $C= \left\{ \top,\land,\lor,\lnot,\limplies,\leventually,\lglobally,\luntil \right\}$.
% \textit{Atomic predicates} $\pi$ can be evaluated from the state of the system $x^i$ (at dimension $i$), the constants $c_i$ and standard arithmetic connectives:
% ``$+$'' (addition),
% ``$-$'' (subtraction),
% ``$\cdot$'' (multiplication),
% ``$<$'' (numeric comparisons).

We employ the Boolean semantics of an STL formula in \textit{strong} and \textit{weak} views. We denote the length of a trajectory $\trace \in \intervaltrace$ by $T$ \cite{Xu2019}.
%\[
%\intervaltrace_T(t_j)=\intervaltrace{(t_j)}\quad\forall{t_j}\in[t_0,T]
%\]
%\[
%\intervaltrace[j]_T=\intervaltrace[j]\quad\forall{j}\in[0,T]
%\]
In the strong view, $(\trace,t_j)\models_{S}\formula$ means that trajectory $\trace\in\intervaltrace$ strongly satisfies the formula $\formula$ at time-step $t_j$.
$(\trace,t_j)\not\models_{S}\formula$ means that trajectory 
$\trace\in\intervaltrace$ strongly violates the formula 
$\formula$ at time-step $t_j$. Similar interpretations hold true for the weak view ($\models_{W}$ means "satisfies weakly" and $\not\models_{W}$ means "violates weakly"~). By considering the Boolean semantics of STL formulas in strong and weak views, the strong satisfaction or violation of an STL formula $\formula$ by a trajectory at time-step $t_j$ implies the weak satisfaction or violation of the STL formula by the trajectory at time-step $t_j$. Consequently, we can take either of the views for trajectories with label $\class_i=\posclass$ and trajectories with label $\class_i=\negclass$ for perfect classification \cite{Xu2019}. In this paper, we choose to adopt the strong view for the problem formulations of Sections \ref{problem formulation} and \ref{Classifying non-separable interval trajectories framework}. 
%In the following, $\mathcal{L}:\mathbb{X}\rightarrow2^{\Pi}$ is a function assigning a subset of atomic predicates in $\Pi$ to each state $x\in\mathbb{X}$.\par
\begin{definition}
The Boolean semantics of an STL formula $\formula$, for a trajectory $\trace\in\intervaltrace$ with the time length of $T$ at time-step $t_j$ in strong view is defined recursively as follows.
\[
\begin{split}
(\trace,t_j)\models_{S}\pi~\mbox{iff}&~~
t_j\leq{T}~\mbox{and}~
{f}(\trace[j])>0\\
(\trace,t_j)\models_{S}\lnot\formula~\mbox{iff}&~~(\trace,t_j)\not\models_{W}\formula,\\
(\trace,t_j)\models_{S}\formula_{1}\wedge\formula_{2}~\mbox{iff}&~~(\trace,t_j)\models_{S}\formula_{1}~\mbox{and}\\
&~~(\trace,t_j)\models_{S}\formula_{2},\\
(\trace,t_j)\models_{S}\formula{_1}\luntil_{[a,b)}\formula_2~\mbox{iff}&~~\exists{j'}\in[j+a,j+b),\\
~(\trace,t_{j'}) \models_{S} \formula_2 ~\mbox{and} &~~\forall{j''}\in{[j+a,j')}\mbox{,}
~(\trace,t_{j''}) \models_{S} \formula_1.
\end{split}
\]
\end{definition}
               
\begin{definition}
 The Boolean semantics of an STL formula $\formula$, for a trajectory $\trace\in\intervaltrace$ with the time length of $T$ at time-step $t_j$ in weak view is defined recursively as follows.
\begin{align*}
(\trace,t_j)\models_{W}\pi~\mbox{iff}&~~
\mbox{either}~\mbox{of}~\mbox{the}~\mbox{1)}~\mbox{or}~\mbox{2)}~\mbox{ holds:}~~~~~~~~~~~\\
&\mbox{1) }t_j>T,~
\mbox{2) }t_j\leq{T}~\mbox{and}\\&~~
f(\trace[j])>0,\\
(\trace,t_j)\models_{W}\lnot\formula~\mbox{iff}&~~ (\trace,t_j)\not\models_{S}\formula,\\
(\trace,t_j)\models_{W}\formula_{1}\wedge\formula_{2}~\mbox{iff}&~~  (\trace,t_j)\models_{W}\formula
_{1}~\mbox{and}\\
&~~(\trace,t_j)\models_{W}\formula_{2},\\
(\trace,t_j)\models_{W}\formula{_1}\luntil_{[a,b)}\formula_2~\mbox{iff}&~~\exists{j'}\in[j+a,j+b),\\
~(\trace,t_{j'}) \models_{W} \formula_2 ~\mbox{and} &~~\forall{j''}\in{[j+a,j')}\mbox{,}
~(\trace,t_{j''}) \models_{W} \formula_1.
\end{align*}
\end{definition}

\paragraph{Syntax DAG}
Any STL formula can be represented as a syntax directed acyclic graph, i.e., syntax DAG. In a syntax DAG, the nodes are labeled with atomic predicates or temporal operators that form an STL formula \cite{Neider2019}.
For instance, Figure \ref{example DAG} shows the unique syntax DAG of the formula $(\pi_1\luntil\pi_2) \land \lglobally(\pi_1\lor\pi_2)$, in which the subformula $\pi_2$ is shared.
Figure \ref{example DAG indetifiers} shows arrangement of the identifiers of each node in the syntax DAG ($i\in\{1,..,7\}$).

\begin{figure}[ht]
    \centering
     \begin{subfigure}{0.23\textwidth}
         \centering
         \includegraphics[scale=0.2]{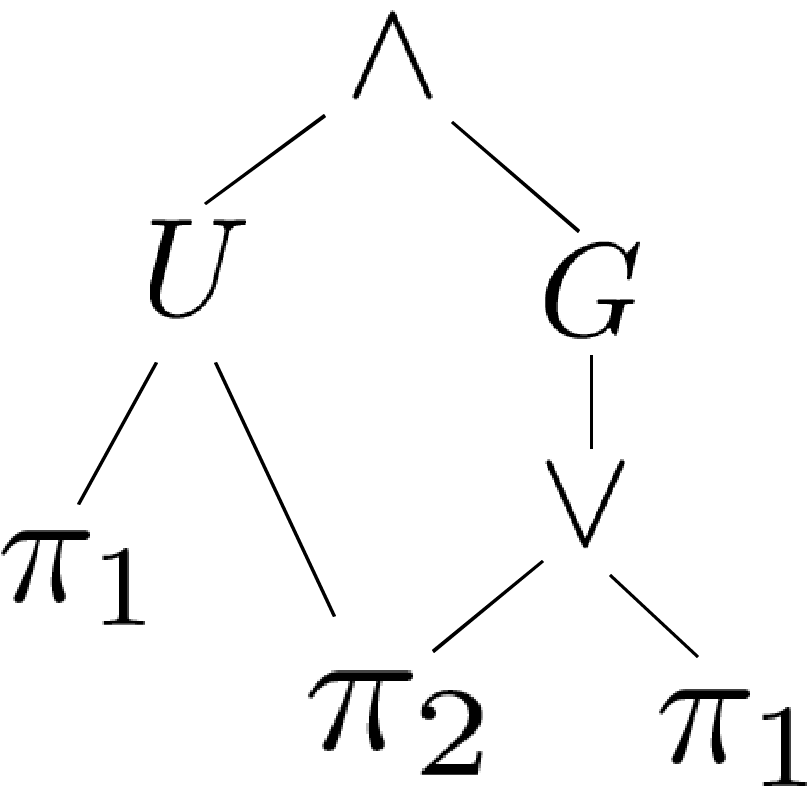}
         \caption{Syntax DAG}
         \label{example DAG}
     \end{subfigure}
     \begin{subfigure}{0.23\textwidth}
         \centering
         \includegraphics[scale=0.2]{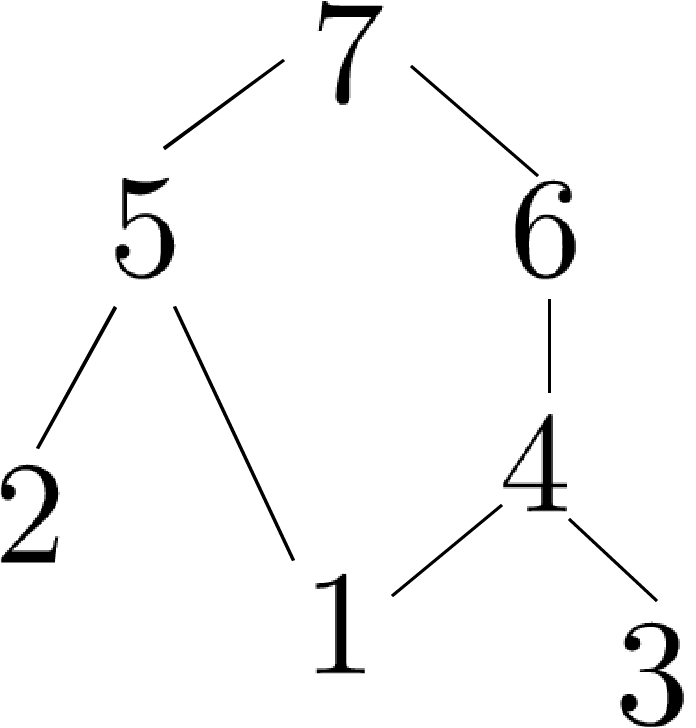}
         \caption{Identifiers}
         \label{example DAG indetifiers}
     \end{subfigure}
     \caption{Syntax DAG and identifier of syntax DAG of the formula $(\pi_1\luntil\pi_2) \wedge \lglobally(\pi_1\vee\pi_2)$}
\end{figure}

\vspace{-20pt}
%--------------------------------------------------
\paragraph{Size of an STL formula}
If we present an STL formula by a syntax DAG, then each node corresponds to a subformula; thus, the size of an STL formula is equal to the number of the DAG nodes. We denote the size of an STL formula $\formula$ by $|\formula|$ \cite{Schneider2004}.

\subsection{~Robust Semantics of STL Formulas }\label{STL formula robustness margin}

%\vspace{-3pt}
Robust semantics quantifies the margin at which a certain trajectory satisfies or violates an STL formula $\formula$ at time-step $t_j$. The robustness margin of a trajectory $\trace$ with respect to an STL formula
$\formula$ at time-step $t_j$ is given by $\robustness(\trace,\formula,t_j)$, where $\robustness(\zeta,\formula,t_j)$ can be calculated
recursively via the robust semantics \cite{Fainekos2009}.

\begin{align*}
    \begin{split}
    %\robustness(\trace,\sigma_1<\sigma_2,t_j)  &= %\sigma_2(\trace[j]) - \sigma_1(\trace[j]),\\
    %\robustness(\trace,\sigma_1>\sigma_2,t_j)  &= \sigma_1(\trace[j]) - \sigma_2(\trace[j]),\\
    \robustness(\trace,\pi,t_j)  &= f(\trace[j]),\\
    \robustness(\trace,\lnot\formula,t_j)  &= -\robustness(\trace,\formula,t_j),\\
    \robustness(\trace,\formula_{1}\wedge\formula_{2},t_j)  &= \min(\robustness(\trace,\formula_{1},t_j),\robustness(\trace,\formula_{2},t_j)),\\
    %\robustness(x,F_{\lbrack{a},b)}\formula,t)  &= \max\limits_{t+a
    %	_{1}\leq t^{\prime}<t+b}\robustness(x,\formula,t^{\prime}),\\
    %\robustness(x,G_{\lbrack{a},b)}\formula,t)  &= \min\limits_{t+a\leq
    %	t^{\prime}<t+b}\robustness(x,\formula,t^{\prime}).
    \robustness(\trace,\formula_{1}\luntil_{\lbrack{a},b)}\formula_{2},t_j) &=
    \max\limits_{j+a \leq j' < j+b}(  \min( \robustness(\trace,\formula_{2},t_{j'}), \\
    &\min\limits_{j+a \leq j'' < j'} \robustness(\trace,\formula_{1},t_{j''}) )).
    \end{split}
\end{align*}
%\noindent where $\sigma(\trace[j])$ is the value of signal $\sigma$ in trajectory $\trace$ at time-step $t_j$.

We can define the robustness margin of an interval trajectory in two views: \textit{worst-case} and \textit{best-case}.
The worst-case view chooses the trajectory with the minimum corresponding robustness within an interval trajectory (Eq. \eqref{robustness transfomation-worst}).
The best-case view chooses the trajectory with maximum corresponding robustness within an interval trajectory (Eq. \eqref{robustness transfomation-best}); thus, we define the robustness margin of an interval trajectory in two views, as follows.
\begin{align}
    \underline{\robustness}(\intervaltrace,\formula,t_j)=\min\limits_{\trace\in\intervaltrace}\robustness(\trace,\formula,t_j)
    \label{robustness transfomation-worst}
    \\
    \overline{\robustness}(\intervaltrace,\formula,t_j)=\max\limits_{\trace\in\intervaltrace}\robustness(\trace,\formula,t_j)
    \label{robustness transfomation-best}
\end{align}

\section{~Problem formulation} \label{problem formulation}
In this section, we present the problem formulation of perfectly classifying interval trajectories. Then, we derive the sufficient conditions to solve the problem. The existing methods for inferring STL formulas mostly classify finitely many trajectories without considering the uncertainties. For the problem of classifying finitely many trajectories, we define the following. 
%this approach renders inefficient when coping with large-sized datasets. \par
\begin{definition}                                                                                         
	Given a labeled set of trajectories $\Sample=\{({\trace}^i,\class_i)\}^{N_\mathcal{D}}_{i=1}$, $\class_i=\posclass$ represents the desired behavior and $\class_i=\negclass$ represents the undesired behavior, an STL formula $\formula$, evaluated at time $t_0$, perfectly classifies the desired behaviors and the undesired behaviors if the following condition is satisfied.\\
	$({\trace}^i,t_0)\models_{S}\formula$, if $\class_i=\posclass$; $({\trace}^i,t_0)\models_{S}\lnot\formula$, if $\class_i=\negclass$.
	\label{trajectory def}
\end{definition}
With Definition \ref{trajectory def}, the problem of STL inference for classifying finitely many trajectories is as follows.\par
\begin{problem}\label{problem trajectory}                                                                                     
Given a labeled set of trajectories $\Sample=\{({\trace}^i,{\class}_i)\}^{N_{\mathcal{D}}}_{i=1}$, compute an STL formula $\formula$ such that $\formula$, which is evaluated at time $t_0$, perfectly classifies the desired behaviors and undesired behaviors, and $|\formula|\leq{\maxiteration}$, where $\maxiteration$ is a predetermined positive integer.
\end{problem}
Definition \ref{trajectory def} cannot classify infinitely many trajectories; thus, by taking the uncertainties into consideration, and substituting trajectories with interval trajectories, we define the following.
\begin{definition}                                                                                      
	Given a labeled set of interval trajectories $\IntervalSample=\{(\intervaltrace^i,\class_i)\}^{N_\mathcal{D}}_{i=1}$, $\class_i=\posclass$ represents the desired behavior and $\class_i=\negclass$ represents the undesired behavior, an STL formula $\formula$, which is evaluated at time $t_0$, perfectly classifies the desired behaviors and the undesired behaviors if the following condition is satisfied.\\
	if $\class_i=\posclass$, then $\forall\trace\in\intervaltrace^i$, we have $(\trace,t_0)\models_{S}\formula$; 
	if $\class_i=\negclass$, then $\forall \trace\in\intervaltrace^i$, we have $(\trace,t_0)\models_{S}\lnot\formula$.
	\label{interval trajectory def}
	\end{definition}
	Now, we define a problem formulation of classifying infinitely many trajectories within the interval trajectories.
\begin{problem}\label{problem interval trajectory} 

Given a labeled set of interval trajectories $\IntervalSample=\{(\intervaltrace^i,\class_i)\}^{N_\mathcal{D}}_{i=1}$, compute an STL formula $\formula$, which is evaluated at time $t_0$, perfectly classifies the desired behaviors and undesired behaviors, and $|\formula|\leq{\maxiteration}$, where $\maxiteration$ is a predetermined positive integer.
\end{problem}
We need a \textit{sufficient condition} that allows us to use Definition \ref{interval trajectory def} to perfectly classify interval trajectories. Before deriving a such condition, we define \textit{separable} interval trajectories as follows.\par
%(Definition \ref{def:strong-view} and \ref{def:weak-view}).
%\begin{definition}\label{separable interval trajectories}
%     Given a set of labeled interval trajectories $\IntervalSample=\{(\intervaltrace^i,\class_i)\}^{N_\mathcal{D}}_{i=1}$, we define that two interval trajectories, $\intervaltrace^{\Tilde{i}}$ and $\intervaltrace^i$ are \textbf{separable} which is denoted by $\intervaltrace^i\cap\intervaltrace^{\Tilde{i}}=\emptyset$, if there exists at least one time-step $t_j$ and one dimension $k$ such that the two intervals $\intervaltrace[j][k]^{\Tilde{i}}$ and $\intervaltrace[j][k]^i$ do not intersect which is denoted by $\intervaltrace[j][k]^{\Tilde{i}}\cap\intervaltrace[j][k]^i=\emptyset$.
%\end{definition}
%\begin{definition}\label{separable set of interval trajectories}
%     We define that two sets of labeled interval trajectories $P_{unc}=\{(\intervaltrace^i,\class_i=\posclass)\}^{N_P}_{i=1}$ and  $N_{unc}=\{(\intervaltrace^{\hat{i}},\class_i=\negclass)\}^{N_N}_{\hat{i}=1}$ ($P_{unc}\cup{N_{unc}}=\IntervalSample$) are \textbf{separable} which is denoted by $P_{unc}\cap{N_{unc}}=\emptyset$, if there exists at least one time-step $t_j$ and one dimension $k$ in the interval trajectories $\intervaltrace^i$ and $\intervaltrace^{\hat{i}}$ such that $\intervaltrace[j][k]^{\hat{i}}\cap\intervaltrace[j][k]^i=\emptyset$.
%\end{definition}

\begin{definition}
    \label{def:separable}
    
    We define that two interval trajectories $\intervaltrace$ and $\intervaltrace'$ are separable
    if there exists at least one time-step $t_j$ and one dimension $k$ such that the two intervals $\intervaltrace[j][k]$ and $\intervaltrace[j][k]'$ do not intersect, i.e., $\intervaltrace[j][k] \cap \intervaltrace[j][k]' = \emptyset$.
\end{definition}  
\begin{definition}\label{def:seperable sets}
    We define that two finite sets of interval trajectories $\mathrm{Z}$ and $\mathrm{Z}'$ are separable if all pairs of interval trajectories 
    $\intervaltrace \in \mathrm{Z}$ and $\intervaltrace' \in \mathrm{Z}'$
    are separable.
    
    By extension, we write that a labeled set of interval trajectories $\IntervalSample=\{(\intervaltrace^i,\class_i)\}^{N_\mathcal{D}}_{i=1}
    %= P_{unc} \cup N_{unc}
    $ is separable
    if $\left\{ \intervaltrace^i \middle| \class_i=\posclass \right\}$
    and $\left\{ \intervaltrace^i \middle| \class_i=\negclass \right\}$
    are separable.
\end{definition}

Now, we provide a sufficient condition that allows us to use Definition \ref{interval trajectory def} to classify two interval trajectories.
\begin{theorem}\label{thm:perfect-classification-2traj}
If $\intervaltrace^i$
with label $\class_i=\posclass$  and 
$\intervaltrace^{\Tilde{i}}$ with label $\class_{\Tilde{i}}=\negclass$ are separable, then there exists at least one STL formula that perfectly classifies these two interval trajectories.
\end{theorem}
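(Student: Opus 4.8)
The plan is to prove the statement \emph{constructively}: separability hands us a single coordinate and a single time-step on which the two tubes are already (linearly) separated, and a one-predicate temporal formula built from that coordinate does the whole job. So the first step is to apply Definition~\ref{def:separable} and fix a time-step $t_j\in\mathbb{T}$ and a dimension $k\in\{1,\dots,n\}$ for which the $k$-th coordinate intervals of the $t_j$-values of $\intervaltrace^i$ and $\intervaltrace^{\tilde i}$ are disjoint. Write these two closed real intervals as $[\underline a,\overline a]$ (for $\intervaltrace^i$, the $+1$ tube) and $[\underline b,\overline b]$ (for $\intervaltrace^{\tilde i}$, the $-1$ tube); by definition of an interval trajectory every $\trace\in\intervaltrace^i$ has $\trace[j]^k\in[\underline a,\overline a]$ and every $\trace\in\intervaltrace^{\tilde i}$ has $\trace[j]^k\in[\underline b,\overline b]$. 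Disjointness of two closed intervals in $\real$ leaves exactly two cases, $\overline a<\underline b$ or $\overline b<\underline a$, which I handle separately so as to orient the predicate correctly.

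In the case $\overline a<\underline b$, put $c:=\tfrac12(\overline a+\underline b)$ and let $\pi$ be the atomic predicate with $f(x):=c-x^k$; in the case $\overline b<\underline a$, put $c:=\tfrac12(\overline b+\underline a)$ and let $\pi$ be the predicate with $f(x):=x^k-c$. In either case the construction gives, for every $\trace\in\intervaltrace^i$, $f(\trace[j])>0$, and for every $\trace\in\intervaltrace^{\tilde i}$, $f(\trace[j])<0$. I then propose as witness the formula $\formula:=\leventually_{[j,j+1)}\pi$ (equivalently $\top\luntil_{[j,j+1)}\pi$), which is well-formed since $j<j+1$.

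Next I would unfold the semantics. Because the time domain is discrete, the index set $[j,j+1)$ reduces to the single index $j$, and when the Until clause of the strong (resp.\ weak) Boolean semantics is evaluated at $t_0$, the universally-quantified conjunct ranges over $[j,j)=\emptyset$ and is vacuous; hence $(\trace,t_0)\models_S\formula\iff(\trace,t_j)\models_S\pi$ and $(\trace,t_0)\models_W\formula\iff(\trace,t_j)\models_W\pi$. Since $t_j$ is a genuine time-step of every trajectory in either interval trajectory we have $t_j\le T$, so the ``$t_j>T$'' disjunct in the weak semantics of a predicate never fires, and for every such $\trace$, $(\trace,t_j)\models_S\pi\iff(\trace,t_j)\models_W\pi\iff f(\trace[j])>0$. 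Combining with the previous step: for $\trace\in\intervaltrace^i$ we get $f(\trace[j])>0$, hence $(\trace,t_0)\models_S\formula$; for $\trace\in\intervaltrace^{\tilde i}$ we get $f(\trace[j])<0$, hence $(\trace,t_0)\not\models_W\formula$, which by the negation clause of the strong semantics is exactly $(\trace,t_0)\models_S\lnot\formula$. That is precisely the perfect-classification condition of Definition~\ref{interval trajectory def} restricted to the pair $\{\intervaltrace^i,\intervaltrace^{\tilde i}\}$, so $\formula$ proves the claim.

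There is nothing deep here, so I do not expect a real obstacle; the only point demanding care is the bookkeeping in the second construction step, namely choosing the threshold $c$ and the sign inside $\pi$ so that it is the $+1$ tube that \emph{satisfies} $\pi$ rather than its negation — this is exactly why the two disjointness cases cannot be collapsed into one. The unfolding of the point-interval $\leventually$ through the Until semantics and the strong/weak equivalence for predicates at $t_j\le T$ are then routine.
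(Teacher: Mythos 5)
Your proposal is correct and takes essentially the same route as the paper: both fix the separating time-step $t_j$ and dimension $k$, take the midpoint of the gap between the two disjoint coordinate intervals as a threshold, and exhibit the one-predicate formula $\leventually_{[j,j+1)}(x^k \gtrless c)$ as the classifier. The only differences are presentational — the paper dispatches the orientation by a ``without loss of generality'' where you treat both sign cases explicitly, and you additionally spell out the unfolding of the strong/weak Until semantics at $t_0$, which the paper leaves implicit.
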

\begin{proof}
 See Appendix \ref{apdx:prf:perfect-classification-2traj}
\end{proof}
Now that we have the sufficient condition for perfect classification of two interval trajectories, we provide the sufficient condition for the case of having multiple interval trajectories.\par 
\begin{theorem}\label{thm:perfect-classification-sample}
  If a given labeled set of interval trajectories $\IntervalSample=\{(\intervaltrace^i,\class_i)\}^{N_\mathcal{D}}_{i=1}$ is separable, then
  there exists at least one STL formula $\formula$ that perfectly classifies $\IntervalSample$.
\end{theorem}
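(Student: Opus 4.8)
The plan is to bootstrap from the two-interval-trajectory case of Theorem~\ref{thm:perfect-classification-2traj} by a disjunctive-normal-form construction over the pairwise classifiers. Since $\IntervalSample$ is separable, Definition~\ref{def:seperable sets} guarantees that every positive interval trajectory $\intervaltrace^i$ (with $\class_i=\posclass$) and every negative interval trajectory $\intervaltrace^{\Tilde{i}}$ (with $\class_{\Tilde{i}}=\negclass$) form a separable pair; hence Theorem~\ref{thm:perfect-classification-2traj} supplies an STL formula $\formula_{i,\Tilde{i}}$ with $(\trace,t_0)\models_{S}\formula_{i,\Tilde{i}}$ for every $\trace\in\intervaltrace^i$ and $(\trace,t_0)\models_{S}\lnot\formula_{i,\Tilde{i}}$ for every $\trace\in\intervaltrace^{\Tilde{i}}$. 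I would then take
\[
  \formula \;:=\; \bigvee_{i\,:\,\class_i=\posclass}\ \ \bigwedge_{\Tilde{i}\,:\,\class_{\Tilde{i}}=\negclass}\ \formula_{i,\Tilde{i}},
\]
with the conventions that an empty conjunction denotes $\top$ and an empty disjunction denotes $\lnot\top$, so the degenerate cases in which one of the two classes is empty are subsumed. (Both index sets are finite because $N_\mathcal{D}$ is finite, so these are ordinary nested binary $\lor$ and $\land$.)

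The verification is two symmetric checks, and essentially the only care needed is in propagating the strong and weak views through the Boolean connectives. First I would record the bookkeeping facts that for $\land$ and $\lor$ strong satisfaction behaves classically: $(\trace,t_j)\models_{S}\psi_1\land\psi_2$ iff both conjuncts are strongly satisfied (immediate from the strong-view definition), and $(\trace,t_j)\models_{S}\psi_1\lor\psi_2$ iff some disjunct is strongly satisfied (by unfolding $\psi_1\lor\psi_2=\lnot(\lnot\psi_1\land\lnot\psi_2)$ and using that $\models_{S}\lnot$ is $\not\models_{W}$ and $\models_{W}\lnot$ is $\not\models_{S}$); likewise weak satisfaction distributes over $\land$ and $\lor$, and both statements extend to finite arity by induction. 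Then: for a positive $\intervaltrace^{i_0}$ and any $\trace\in\intervaltrace^{i_0}$, every conjunct $\formula_{i_0,\Tilde{i}}$ is strongly satisfied at $t_0$, so the $i_0$-th inner conjunction is, so the full disjunction $\formula$ is; thus $(\trace,t_0)\models_{S}\formula$. For a negative $\intervaltrace^{\Tilde{i}_0}$ and any $\trace\in\intervaltrace^{\Tilde{i}_0}$, and for every positive index $i$, the conjunct $\formula_{i,\Tilde{i}_0}$ satisfies $(\trace,t_0)\not\models_{W}\formula_{i,\Tilde{i}_0}$ (which is exactly what $(\trace,t_0)\models_{S}\lnot\formula_{i,\Tilde{i}_0}$ asserts), so the $i$-th inner conjunction is not weakly satisfied at $t_0$; since this holds for every disjunct, $(\trace,t_0)\not\models_{W}\formula$, i.e. $(\trace,t_0)\models_{S}\lnot\formula$. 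By Definition~\ref{interval trajectory def} these two facts are precisely that $\formula$ perfectly classifies $\IntervalSample$.

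The main obstacle --- really the only subtle point --- is the non-classical, paired strong/weak semantics: because $\lnot$ swaps the two views, one cannot fall back on plain propositional reasoning, and the argument has to be arranged so that the positive interval trajectories are treated entirely in the strong view and the negative ones entirely in the weak view, with the equivalence $(\trace,t_0)\models_{S}\lnot\psi \Leftrightarrow (\trace,t_0)\not\models_{W}\psi$ as the sole bridge; getting the distribution facts above stated correctly for both views is where all the care goes. I would also note, although Theorem~\ref{thm:perfect-classification-sample} only asserts existence, that the construction is explicit and its size is bounded polynomially in the sizes of the $\formula_{i,\Tilde{i}}$ and in $N_\mathcal{D}$, which is the quantity that matters once the size budget $|\formula|\le\maxiteration$ of Problem~\ref{problem interval trajectory} is taken into account.
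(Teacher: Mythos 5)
Your construction is exactly the one in the paper's proof: the paper also takes $\formula=\biglor_{(\intervaltrace^i,\posclass)}\bigland_{(\intervaltrace^{i'},\negclass)}\formula_{ii'}$ built from the pairwise classifiers of Theorem~\ref{thm:perfect-classification-2traj} and asserts it is strongly satisfied by all positives and strongly violated by all negatives. Your version is correct and in fact more careful than the paper's, since you explicitly verify how strong and weak satisfaction propagate through $\land$ and $\lor$ (a step the paper leaves implicit), but the approach is the same.
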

\begin{proof}
 See Appendix \ref{apdx:prf:perfect-classification-sample}
\end{proof}

\section{The framework of STL inference for non-separable interval trajectories}
\label{Classifying non-separable interval trajectories framework}
One source of uncertainty in a dataset is overlap between the interval trajectories satisfying an STL formula $\formula$ and interval trajectories violating $\formula$. This type of dataset is called \textit{non-separable} dataset \cite{Jiang2014}. We deploy robust semantics to set up a method to infer STL formulas for non-separable dataset with two labeled classes. Therefore, we define that two interval trajectories are \textit{non-separable} if they are not separable according to Definition \ref{def:separable}. Similarly, we define a \text{non-separable} labeled dataset if it is not separable according to Definition \ref{def:seperable sets}.\par
%\begin{definition}\label{intrval trajectory overlap}
%    Given a set of labeled interval trajectories $\IntervalSample=\{(\intervaltrace^i,\class_i)\}^{N_\mathcal{D}}_{i=1}$, we define that interval trajectories, $\intervaltrace^{\Tilde{i}}$ and $\intervaltrace^i$ are \textbf{non-separable}, denoted by $\intervaltrace^i\cap^{p}\intervaltrace^{\Tilde{i}}\neq\emptyset$ (superscript $p$ denotes \textquotedblleft{point-wise conjunction}\textquotedblright~ meaning that the conjunction holds for all dimensions $k$ and for all time-steps $j$), if the following holds: there exists trajectories $\trace\in\intervaltrace^i$ and $\Tilde{\trace}\in\intervaltrace^{\Tilde{i}}$ such that $\trace[j]^k=\Tilde{\trace}_j^k$ for all time-steps $t_j$ and for all dimensions $k$.
%\end{definition}
%\begin{definition}\label{settrajectory overlap}
 %    If we represent a given labeled set of interval trajectories $\IntervalSample=(P_{unc},N_{unc})$, where $P_{unc}=\{(\intervaltrace^i,\class_i=\posclass)\}^{N_P}_{i=1}$ and $N_{unc}=\{(\intervaltrace^{\hat{i}},\class_i=\negclass)\}^{N_N}_{\hat{i}=1}$, and $N_N\leq{N_P}$; then, we define that two sets of interval trajectories are \textbf{non-separable}, and denoted by $P_{unc}\cap^{p}{N_{unc}}\neq\emptyset$, (point-wise conjunction denoted by superscripts $p$), if the following holds: for all indices $\hat{i}$ and $i$, there exists interval trajectories $\intervaltrace^i\in{P_{unc}}$\ and $\intervaltrace^{\hat{i}}\in{N_{unc}}$ such that $\intervaltrace^i\cap^{p}\intervaltrace^{\hat{i}}\neq\emptyset$.
%\end{definition}

Given a set of $N_D$ labeled interval trajectories $\IntervalSample=\{(\intervaltrace^i,\class_i)\}^{N_\mathcal{D}}_{i=1}$,
we define in Eq. \eqref{eq:objective-onetraj} a function $\ObjectiveOneTraj$ that gives the worst-case robustness margin of an interval trajectory $\intervaltrace^i$ with respect to $\formula$ or $\lnot\formula$ if $\class_i=\posclass$ or $\class_i=\negclass$, respectively.
\begin{align}
 {\ObjectiveOneTraj(\intervaltrace^i,\class_i,\formula)}
 &:=
  \begin{cases}
   \underline{r}(\intervaltrace^i,\formula,t_0), & \text{if $\class_i=\posclass$}.\\
  \underline{r}(\intervaltrace^{i},\lnot\formula,t_ 0) , & \text{if $\class_{i}=\negclass$}.
 \end{cases}
  \label{eq:objective-onetraj}
\end{align}

We then construct in Eq. \eqref{eq:objective-sample} our objective function $\ObjectiveSample$.
If we consider the STL formula for perfect classification of $\IntervalSample$: % by $\formula$:
$
\bigland_{\trace\in\intervaltrace^i, \class_i=\posclass}
%\bigland_{(\intervaltrace^i, \class_i=\posclass) \in \IntervalSample} \bigland_{\trace\in\intervaltrace^i}
\left( \trace \models_{S} \formula \right)
\land
\bigland_{\trace\in\intervaltrace^i, \class_i=\negclass}
%\bigland_{(\intervaltrace^i, \class_i=\negclass) \in \IntervalSample} \bigland_{\trace\in\intervaltrace^i}
\left( \trace \models_{S} \lnot\formula \right)
$,
$\ObjectiveSample$ would be the worst-case robustness margin of it.
Hence, $\ObjectiveSample$ represents the lower worst-case robustness margin amongst all the interval trajectories.
\begin{align}
  \ObjectiveSample(\IntervalSample,\formula)
  &:=
  \min\limits_{i=1,..,N_D}{\ObjectiveOneTraj(\intervaltrace^i,\class_i,\formula)}
  \label{eq:objective-sample}
\end{align}

\begin{problem}\label{non-separable data prob 1}
    Given a possibly non-separable set of labeled interval trajectories $\IntervalSample=\{(\intervaltrace^i,\class_i)\}^{N_\mathcal{D}}_{i=1}$, compute an STL formula $\formula$ that maximizes $\ObjectiveSample(\IntervalSample,\formula)$ such that $|\formula|\leq{\maxiteration}$, where $\maxiteration$ is a predetermined positive integer. 
    %for the some of the interval trajectories $\intervaltrace^i$ with label $\class_i=\posclass$ that satisfy $\formula$, and for some of the interval trajectories $\intervaltrace^{i'}$ with $\class_{i'}=\negclass$ that satisfy $\lnot{\formula}$, at time-step $t_0$.
\end{problem}
%\begin{align}
%  \ObjectiveSample(\IntervalSample,\formula)
%  \min\limits_{i=1,..,N_D}{\ObjectiveOneTraj(\intervaltrace^i,\class_i,\formula)}
%  \label{eq:objective-sample}
%\end{align}
 To solve Problem \ref{non-separable data prob 1}, we compute an STL formula $\formula$ by maximizing \ObjectiveSample$(\IntervalSample,\formula)$, and we set an upper-bound on the size of the  $\formula$ for interpretability.\par
 Figure \ref{non-separable example} shows a simple illustrative example of how we use \ObjectiveOneTraj$(\intervaltrace^i,\class_i,\formula)$~ for computing such an STL formula.
%for non-separable interval trajectories such that $\ObjectiveSample\geq\minrobustness$ .
In this example, the inferred STL formula can be in the form of $\formula := x^1>c$, where $x^1$ is the state of a 1-dimensional trajectory.

\begin{figure}[h]
    \centering
    \includegraphics[scale=0.4]{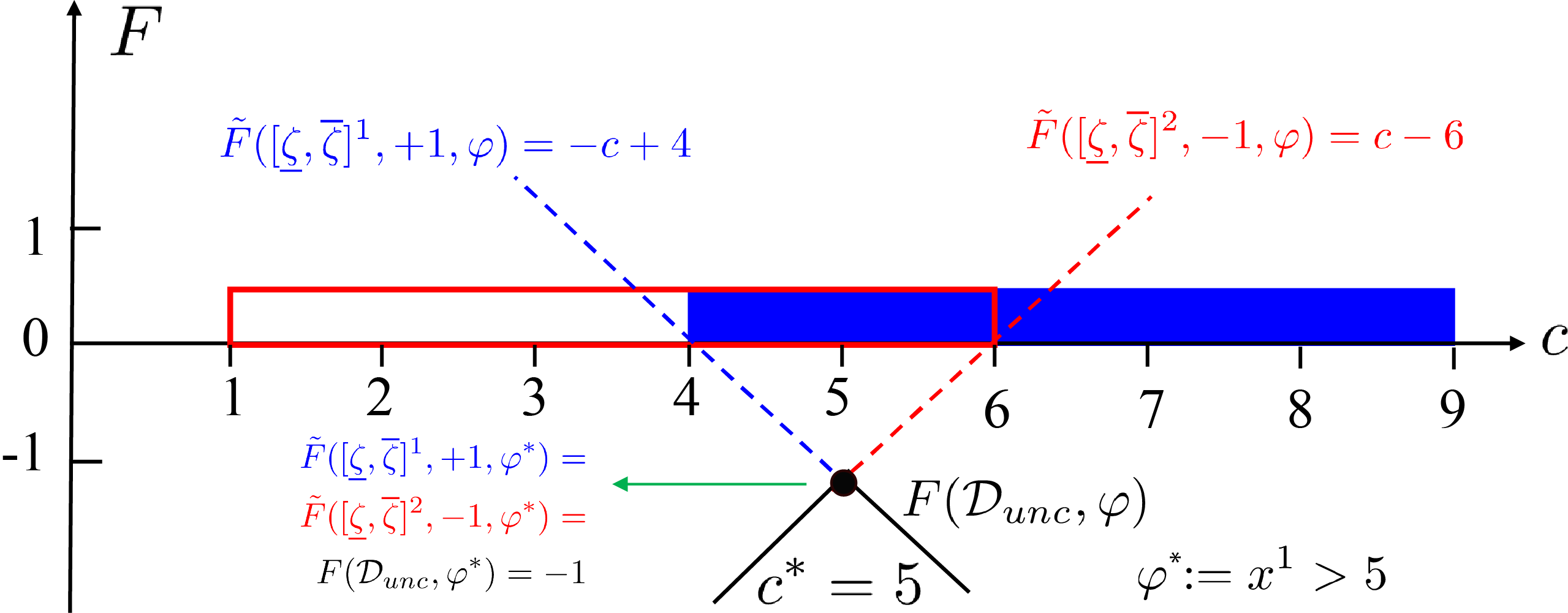}
    % \caption{
    %     In this illustrative example we show how we use $\ObjectiveOneTraj(\intervaltrace^i,\class_i,\formula)$ and $\ObjectiveSample(\IntervalSample,\formula)$ to compute an STL formula for non-separable interval trajectories. The vertical axis represents \ObjectiveOneTraj$(\intervaltrace^i,\class_i,\formula)$, and the horizontal axis represents $c$.
    %     In this simple example, $\ObjectiveOneTraj(\intervaltrace^i,\class_i,\formula)$ can be represented as $\ObjectiveOneTraj(\intervaltrace[0],1,\formula)=-c+b_{\underline{r}}$,
    %     if $\class_i=\posclass$, and can be represented as $\ObjectiveOneTraj(\intervaltrace[0],-1,\formula)=c+b_{\overline{r}}$, if $\class_i=\negclass$.
    %     In these functions, $b_{\underline{r}}=\underline{\trace}_0\in\intervaltrace[0]$ for interval trajectories with label $\class_i=\posclass$, and $b_{\overline{r}}=\overline{\trace}_0\in\intervaltrace[0]$ for interval trajectories with label $\class_i=\negclass$. For two non-separable interval trajectories $[4, 9]$ at time-step $t_0$ with label $\class_i=\posclass$ (filled blue box) and $[1, 6]$ at time-step $t_0$ with label $\class_i=\negclass$ (empty red box), we compute an STL formula by choosing $c$ such that $\ObjectiveSample(\IntervalSample,\formula)$ is maximized.
    %     In this example, the optimal values for $\ObjectiveOneTraj([1,6],\negclass,\formula^{*})=\ObjectiveOneTraj([4,9],\posclass,\formula^{*})$ are -1, and $c^{*}=5$, and $\formula^{*} = x^1 \geq 5$.
    % }
    \caption{
        In this illustrative example we show how we use $\ObjectiveSample(\IntervalSample,\formula)$ and $\ObjectiveOneTraj(\intervaltrace^i,\class_i,\formula)$ to compute an STL formula for non-separable interval trajectories.
        We consider a labeled set $\IntervalSample$ composed of two interval trajectories $\intervaltrace^1$ and $\intervaltrace^2$,
        and a unique atomic predicate $\pi_1 := f(x) > 0$ with $f(x) = x^1 - c$, where $c$ is an unknown constant.
        For sake of simplicity, we only consider STL formula $\formula$ in the form of $\pi_1$ in this example.
        The first interval trajectory is defined such that $\class_1 = \posclass$ and $\intervaltrace[0][1]^1 = [4,9]$ (filled blue box).
        The second interval trajectory is defined such that $\class_2 = \negclass$ and $\intervaltrace[0][1]^2 = [1,6]$ (empty red box).
        The vertical axis represents the objective function $\ObjectiveSample$, as well as the underlying functions $\ObjectiveOneTraj$ for each interval trajectory,
        and the horizontal axis represents $c$.
        In this example, the optimal value for $\ObjectiveSample(\IntervalSample, \formula)$ is $-1$ and is achieved at $\formula^{*} = x^1 > c^{*}$ and $c^{*}=5$.
    }
    \label{non-separable example}
\end{figure}

\section{Uncertainty-aware SMT-based Algorithms}\label{SMT-based algorithms}
In this section, we propose and explain two uncertainty-aware algorithms for STL inference. The first uncertainty-aware algorithm is denoted as \RobustMaxSMT{}. The second algorithm relies on the first one and uses decision tree to infer STL formulas which is denoted as \RobustMaxSMTDT{}.

%-----------------------------------------------
\paragraph{Satisfiabilty modulo theories (\SMT{}) and \OptSMT{} solvers}
One of the common concepts involved in the process of inferring STL formulas is \textit{satisfiabilty}.
Satisfiabilty addresses the following problem statement: \textit{How can we assess whether a logical formula is satisfiable?} Based on the type of the problem we deal with, we can exploit different solvers.
For example, if the satisfiabilty problem is in the Boolean domain, we can use \textit{Boolean satisfiability} (\SAT{}) solvers; or if the satisfiability problem is in the continuous domain, we can use \textit{satisfiability modulo theories} (\SMT{}) solvers.
%uncertainty-aware STL is in the continuous domain; thus, we use \SMT{} instead of \SAT{}.
In this paper, we consider trajectories in the real domain, as well as time-bounds of temporal operators as integers, thus we use \SMT{} instead of \SAT{}.
\SMT{} solvers, based on some theories including \textit{arithmetic theories} or \textit{bit-vectors theory}, determine whether an STL formula is satisfiable or not \cite{DeMoura2009}.

Another important concept involved is the incorporation of \textit{optimization} procedures in a \SMT{} problem.
Optimization modulo theories (\OMT{}) is an extension of \SMT{} that allows finding models that optimizes a given objective function \cite{DBLP:journals/corr/SebastianiT17}.
There exists several variants of \OMT{} \cite{DBLP:conf/tacas/BjornerPF15},
in particular \OptSMT{}, where an explicit objective function is given,
and \MaxSMT{}, where soft constraints with different costs are assigned.
Our proposed methods rely on \OptSMT{}.

%------------------------------------------------

\paragraph{\OptSMT{}-based learning algorithm}
%We denote the STL formula encoded by syntax DAG as $\propFormula_n$, where $n$ is the number of the subformulas of  $\propFormula_n$. \par 
In the proposed algorithms, the input to the algorithms is in the form of interval trajectories.
This input contains the data whose behaviors we wish to infer STL formulas for, and consists of two possibly non-separable labeled sets ${P_{unc}}\mbox{, } {N_{unc}}$. 
We categorize ${P_{unc}}$ as the set containing interval trajectories with desired property (or behavior) and  ${N_{unc}}$ as the set containing interval trajectories with the undesired property (or behavior).
%For the sake of the practical terminology, \textit{P} is interpreted as \textit{positive} and \textit{N} is interpreted as \textit{negative}.

We represent the input as a set $\IntervalSample=P_{unc}\cup{N}_{unc}$. 

We present an STL formula by a syntax DAG. The syntax DAG encodes the STL formula by propositional variables. The propositional variables are \cite{Neider2019}:\par
\begin{itemize}
\setlength{\itemsep}{6pt}%
    \item $x_{i, \lambda}$ where $i\in\{1,..., n\}$ and $\lambda\in\Pi\cup{C}$
    \item $l_{i, k}$ where $i\in\{2,.., n\}$ and $k\in\{1,.., i-1\}$
    \item $r_{i, k}$ where $i\in\{2,.., n\}$ and $k\in\{1,.., i-1\}$
\end{itemize}\par
\vspace{5pt}
 To facilitate the encoding, a \textit{unique identifier} is assigned to each node of the DAG, which is denoted by $\textit{i}\in\{1,2,..,\textit{n}\}$. Two mandatory properties of this identifier are: 1) The identifier of the root is $n$, and 2) the identifiers of the children of Node $i$ are less than $i$.
 It should be noted that the node with identifier 1 is always labeled with an atomic predicate $( \pi\in\Pi)$.
 In the listed variables, $\textit{x}_{\textit{i},\lambda}$ is in charge of encoding a labeling system for the syntax DAG such that if a variable $\textit{x}_{\textit{i},\lambda}$ becomes $\ltrue$, then Node $i$ is labeled with $\lambda$.
 The variables $\textit{l}_{\textit{i},\textit{k}}$ and $\textit{r}_{\textit{i},\textit{k}}$ encode the left and right children of the inner nodes.
 If the variable $\textit{l}_{\textit{i},\textit{k}}$ is set to $\ltrue$, then $k$ is the identifier of the left child of Node $i$. If the variable $\textit{r}_{\textit{i},\textit{k}}$ is set to $\ltrue$, then $k$ is the identifier of the right child of Node $i$. Moreover, if Node $i$ is labeled with an unary operator, the variables $\textit{r}_{\textit{i},\textit{k}}$ are ignored and, both of the variables $\textit{l}_{\textit{i},\textit{k}}$ and $\textit{r}_{\textit{i},\textit{k}}$ are ignored in the case that Node $i$ is labeled with an atomic predicate. Moreover, the reason that identifier $i$ ranges from 2 to $n$ for variables $\textit{l}_{\textit{i},\textit{k}}$ and $\textit{r}_{\textit{i},\textit{k}}$ is that Node 1 is always labeled with an atomic predicate and cannot have children.
 Lastly, $k$ ranging from 1 to $i-1$ reflects the point of children of a node having identifiers smaller than their root node. It is crucial to guaranty that each node has exactly one right and one left child.
 This matter is enforced by Eqs. \eqref{eq1}, \eqref{eq2} and \eqref{eq3}.
 In addition, Eq. \eqref{eq4} ensures that Node 1 is labeled with an atomic predicate.

\begin{align}\label{eq1}  \left [  \bigland\limits_{1 \leq  i  \leq n} \biglor\limits_{\lambda\in{\Pi}\cup{C}}  x_{i,\lambda}  \right ] \land \left [  \bigland\limits_{1 \leq  i  \leq n} \bigland\limits_{\lambda\neq\lambda^{'}\in{\Pi}\cup{C}}  \lnot{x_{i,\lambda}}\lor\lnot{x_{i,\lambda^{'}}}  \right ] \nonumber\\
\left.\right.
\end{align}
\vspace{-15pt}
\begin{align}\label{eq2}
  \left [  \bigland\limits_{2 \leq  i \leq n} \biglor\limits_{1 \leq k < i} l_{i,k} \right ] \land \left [  \bigland\limits_{2 \leq  i  \leq n} \bigland\limits_{1\leq{k}<k^{'}<i}  \lnot{l_{i,k}}\lor\lnot{l_{i,k^{'}}}  \right ]\nonumber\\
  \left.\right.
\end{align}
\vspace{-15pt}
\begin{align}\label{eq3}
  \left [  \bigland\limits_{2 \leq  i \leq n} \biglor\limits_{1 \leq k < i} r_{i,k} \right ] \land \left [  \bigland\limits_{2 \leq  i  \leq n} \bigland\limits_{1\leq{k}<k^{'}<i}  \lnot{r_{i,k}}\lor\lnot{r_{i,k^{'}}}  \right ]
\end{align}
\begin{align}\label{eq4}
    \biglor\limits_{\pi\in\Pi} x_{1,\pi}\\
    \left.\right.
\end{align}
\vspace{-11pt}

We introduce two sets of integer variables to the syntax DAG to add the time bounds on the temporal operators.
These variables are denoted by $a_i$ and $b_i$ (where subscript $i$ is the node identifier).
These two variables are used to store the range of time-step indexes $[j+a_i,j+b_i)$ within which the STL formula $\formula_i$ (valuation of formula $\formula$ at Node $i$) holds $\ltrue$ when evaluated at time-step $t_j$.
%In the mentioned range, $a_i,b_i\in\{0,...,|\trace|\}$ are two relative positions in the interval trajectory $\left [ \underline{\trace},\overline{\trace}\right]$, relative to the current position $j$ (note that in this section, ``position'' is equivalent to index of the time-step).
In the proposed algorithms, only the temporal operators $\leventually$, $\lglobally$, and $\luntil$ use these $a_i$ and $b_i$.
We add the following constraint to these variables in Eq. \eqref{eq:mui}.

% $a_i,b_i\in\{0,...,|\trace|\}$ indicate two positions before and after the current position, and determine the range within which the STL formula $\formula_i$ (valuation of formula $\formula$ at Node $i$) holds $\ltrue$.

\vspace{-1em}
\begin{align}
    \label{eq:mui}
    \bigland\limits_{1 \leq i \leq n} 0 \leq a_i < b_i \leq |\trace|
\end{align}

Let the propositional formula $\propFormula^{DAG}_n$ be the conjunction of Eqs. \eqref{eq1} to \eqref{eq:mui}.
$\propFormula^{DAG}_n$ encodes the syntax DAG structural constraints for a yet unknown formula of size $n$.
% We construct a propositional formula $\propFormula^{DAG}_n$ to be used in the algorithm (\RobustMaxSMT{}).
% If we assume that $\propFormula^{DAG}_n$ is a propositional formula constructed by XXX (note that the DAG is encoded by Boolean variables),
We can reconstruct a syntax DAG from a model $\model$, i.e., a valuation of the propositional variables in $\propFormula^{DAG}_n$, as follows.
1) Label Node $i$ with unique label $\lambda$ such that $\model(x_{i,\lambda})=1$
(if label is $\leventually$, $\lglobally$, or $\luntil$, assign time interval $I=[\model(a_{i}),\model(b_{i}))$ to the operator),
2) set the node $n$ as the root and finally,
3) arrange the nodes of the DAG according to $\model(l_{i,k})$ and $\model(r_{i,k})$.
From this syntax DAG, we can derive an STL formula denoted by $\formula_\model$.

To implement the framework which is introduced in Section \ref{Classifying non-separable interval trajectories framework} in the proposed algorithms, we define two real-valued variables: $\underline{y}^{\trace}_{i,j}$ and $\overline{y}^{\trace}_{i,j}$.
These two variables are equivalent to the robustness margin at the worst-case and the best-case, respectively.
%\vspace{-1em}
\setlength{\jot}{-5pt}
\begin{align}
    \label{y_all}
    {\underline{y}}^{\trace}_{i,j} &= \min\limits_{\trace\in\intervaltrace}r(\trace,\formula_i,t_j)
    \\
    \label{y_any}
    \overline{y}^{\trace}_{i,j} &= \max\limits_{\trace\in\intervaltrace}r(\trace,\formula_i,t_j)
\end{align}  

%According to \eqref{robustness transfomation-worst}, the worst-case for the robustness margin of an interval trajectory is the robustness margin corresponding to the trajectory within an interval with the minimum robustness margin.
%The best-case robustness margin of an interval trajectory is the robustness margin corresponding to the trajectory within an interval with the maximum robustness margin \eqref{robustness transfomation-best}.
In this section, the time length of an interval trajectory $T$ is represented by $|\trace|$, and for the better explanation of the algorithm, we denote the index of the time-step as a position in an interval trajectory; hence, in ${\underline{y}}^{\trace}_{i,j}$ Eq. \eqref{y_all} and $\overline{y}^{\trace}_{i,j}$ Eq. \eqref{y_any}, $i\in\{0,...,n\}$ represents a node in the syntax DAG and $j\in\{0,..,|\trace|-1\}$ is a position in the finite interval trajectory $\intervaltrace$.

Specifically, $\overline{y}^{\trace}_{i,j}$ is used for implementing the robust semantics of negation of an STL formula Eq. \eqref{worst to best}.\par

\vspace{-1em}
\setlength{\jot}{1pt}
\begin{align}\label{worst to best}
    \underline{r}(\intervaltrace,\lnot\formula,t_j)=&\min\limits_{\trace\in\intervaltrace}\left(-r(\trace,\formula,t_j)\right)=\\
    &-\max\limits_{\trace\in\intervaltrace}r(\trace,\formula,t_j)
\end{align}

%Moreover, it is crucial that the variables ${\underline{y}}^{\trace}_{i,j}$, $\overline{y}^{\trace}_{i,j}$ of two distinct interval trajectories are disjoint as well.
To implement the semantics of the temporal operators and Boolean connectives, we apply the constraints shown in formulas Eqs. \eqref{eq5} to \eqref{eqlast}.
These constraints are inspired by bounded model checking \cite{Biere2003}.
It should be noted that these constraints are defined similarly for both ${\underline{y}}^{\trace}_{i,j}$, $\overline{y}^{\trace}_{i,j}$. Eq. \eqref{eq5} implements the semantics of the atomic predicates. Eq. \eqref{eq6} implements the semantics of negation. In that formula, if Node $i$ is labeled with $\lnot$ and node $k$ is its left child, then ${\underline{y}}^{\trace}_{i,j}$ is the negation of $\overline{y}^{\trace}_{k,j}$ and its value is $-\overline{y}^{\trace}_{k,j}$.
Similarly, Eq. \eqref{eq7} implements the semantics of disjunction.
In this case, if Node $i$ is labeled with $\lor$, and node $k$ is its left child and node $k^{'}$ is its right child, then ${\underline{y}}^{\trace}_{i,j}$ is equal to the maximum value between ${\underline{y}}^{\trace}_{k,j}$ and ${\underline{y}}^{\trace}_{k^{'},j}$. Eq. \eqref{eqlast} implements the semantics of $\luntil_{[a,b)}$ operator.
In formula \eqref{eqlast}, $a,b\in\{0,1, \dots,|\trace|\}$ denote the starting position and the ending position in which a given STL formula $\formula$ holds $\ltrue$ in interval trajectory $\intervaltrace$.
Similarly, we can define the semantics of other operations: $\leventually_{[a,b)}$, $\lglobally_{[a,b)}$, $\limplies$, $\land$, $\top$.

{\begin{align}\label{eq5}
    \bigland\limits_{1 \leq  i \leq n} \bigland\limits_{\pi\in\Pi} x_{i,\pi}  \limplies \left [  \bigland\limits_{0\leq{j}<{|\trace|}}  \left\{ {{\underline{y}}^{\trace}_{i,j}={\underline{r}}(\intervaltrace,\pi,t_j)} \right\} \right] \nonumber \\
    ~~\left . \right .
\end{align}}\\

\vspace{-25pt}
{\begin{align}\label{eq6}
    \bigland\limits_{\substack{{1 < i < n}\\{1 \le k < i}}} (x_{i,\lnot}\land{l_{i,k}})  \limplies   \bigland\limits_{0\leq{j}<{|\trace|}}
 \left [ {\underline{y}}_{i,j}^{\trace}=-\overline{y}_{k,j}^{\trace}  \right]
\end{align}}\\
\vspace{-25pt}
\begin{align}
    \label{eq7}
    \bigland\limits_{\substack{{1 < i < n}\\{1 \le k,k^{'}< i}}} (x_{i,\lor}\land{l_{i,k}}\land{r_{i,k^{'}}})  \limplies   \bigland\limits_{0\leq{j}<{|\trace|}} \left [{\underline{y}}_{i,j}^{\trace} = \max{({\underline{y}}_{k,j}^{\trace},{\underline{y}}_{k^{'},j}^{\trace}) } \right]
\end{align}\\
\vspace{-25pt}
\begin{align}
% \\
    \label{eqlast}
    \bigland\limits_{\substack{{ 1 < i < n}\\{ 1 \le k,k' < i}}} (x_{i,\luntil_{[a,b)}}\land{l_{i,k}}\land{r_{i,k'}})  \limplies~~~~~~~~~~~~~~~~~~~~~~~~~~~~~~~~\\
    \left [ \bigland\limits_{0\leq{j}<{|\trace|}} {\underline{y}}_{i,j}^{\trace}=  \max\limits_{j+a\leq{j'}<{j+b}} \left( \min\left( {\underline{y}}_{k',j'}^{\trace}, \min\limits_{j+a\leq{j''}<{j'}} {\underline{y}}_{k,j''}^{\trace} \right) \right) \right] ~~~~~~~~\\
\end{align}

\vspace{-15pt}
We construct the actual objective function $Y^{\trace}$ equivalent to $\ObjectiveSample(\IntervalSample,\formula_\model)$ in Eq. \eqref{eq:Y}.
We use the negated best-case robustness margin for trajectories labeled as $\negclass$ to take into account the $\lnot$ in Eq. \eqref{eq:objective-sample}.%, using Eq. \eqref{eq6}.

\begin{align}
%   \Tilde{Y}^{\trace}_{i}
%   &=
%   \begin{cases}
%     +\underline{y}^{\trace}_{i,t_0},
%     & \text{if $\class_i=\posclass$}.
%     \\
%     -\overline{y}^{\trace}_{i,t_0},
%     & \text{if $\class_{i}=\negclass$}.
%   \end{cases}
%   \label{eq:Y_i}
%   \\
  Y^{\trace}
  &:=
  \min\limits_{i=1,..,N_D}
  %\Tilde{Y}^{\trace}_{i}
  \begin{cases}
    +\underline{y}^{\trace}_{i,0},
    & \text{if $\class_i=\posclass$}.
    \\
    -\overline{y}^{\trace}_{i,0},
    & \text{if $\class_{i}=\negclass$}.
  \end{cases}
  \label{eq:Y}
\end{align}

%================================================

% \begin{algorithm}[t]
% 	\small
% 	\KwIn{A sample $\Traces$, min robustness $R \in \real$}
% 	\DontPrintSemicolon
% 	$n\gets 0$\;
% 	%\BlankLine
% 	{\Repeat{$r \geq R$ }
% 		{	$n\gets n+1$\;
% 			Construct formula $\propFormulaOrig{\specDepth}$\; 
% 			Assign objective function \JRG{maximizing $r = \underline{r}(\Traces, \formula_\model, 0)$}\;
% 			%Construct formula $\propFormulaOrig{\specDepth}=\dagConst{\specDepth}\land\satisfactionConst{\specDepth}$\;
% 			%Assign weights to soft constraints in $\propFormulaOrig{\specDepth}$:\\ \qquad$\wt(\y{\specDepth}{0}{\trace})=\wttrace(\trace)$ for$(\trace,1)\in\Traces$, and $\wt(\neg\y{\specDepth}{0}{\trace})=\wttrace(\trace)$ for $(\trace,0)\in\Traces$\;
% 			Find assignment $\model$ using \OptSMT{} solver}
% }
% 	Construct and \Return $\formula_\model$\;
	
% %	G{n is a local variable, k is an input}
	
% 	\caption{\OptSMT{}-based learning algorithm}
% 	\label{alg:max-smt-learner}
% \end{algorithm}

\begin{algorithm}[t] % alg:RobustMaxSMT
	\small
	\Input{
	    Sample $\IntervalSample=\{(\intervaltrace^i,\class_i)\}^{N_D}_{i=1}$ \newline
	    Maximum iteration $\maxiteration \in \nat^{+}$ \newline %\JRG{default=+infinity}
	    Minimum robustness margin $\minrobustness \in \real$ %\JRG{default=+infinity}
	}
	\DontPrintSemicolon
	$n\gets 0$\;
	%\BlankLine
	{
	    \Repeat{$r \geq R$ or $ n > \maxiteration$ }{
	        $n\gets n+1$\;
	    
			%Construct formula $\propFormulaOrig{\specDepth}=\dagConst{\specDepth}\land\satisfactionConst{\specDepth}$\;
			Construct formula $\propFormula^{DAG}_{\specDepth}$
			\label{alg:RobustMaxSMT:line:construct}\;
			
			%\JRG{maximizing $r = \underline{r}(\IntervalSample, \formula_\model, t_0)$}\;
			Assign objective function
			$
			%min_{i} (\ObjectiveOneTraj(\intervaltrace^i, \class_i, \formula))
			%\ObjectiveSample(\IntervalSample, \formula)
			Y^{\trace}
			$
			to be maximized
			\label{alg:RobustMaxSMT:line:objective}\;
			
			%Assign weights to soft constraints in $\propFormulaOrig{\specDepth}$:\\ \qquad$\wt(\y{\specDepth}{0}{\trace})=\wttrace(\trace)$ for$(\trace,1)\in\Traces$, and $\wt(\neg\y{\specDepth}{0}{\trace})=\wttrace(\trace)$ for $(\trace,0)\in\Traces$\;
			
			Find model $\model$ using \OptSMT{} solver
			\label{alg:RobustMaxSMT:line:infer}\;
			
			Construct $\formula_\model$ and evaluate $r\gets
			%min_{i} (\ObjectiveOneTraj(\intervaltrace^i, \class_i, \formula_\model))
			\ObjectiveSample(\IntervalSample, \formula_\model)
			$
			\label{alg:RobustMaxSMT:line:reconstruct}
			\label{alg:RobustMaxSMT:line:evaluate}
		}
    }
	\Return $\formula_\model$\;
	
%	G{n is a local variable, k is an input}
	
	\caption{\RobustMaxSMT{}}
	\label{alg:RobustMaxSMT}
\end{algorithm}

%---------- TLI-UA ----------
\paragraph{Uncertainty-Aware Temporal Logic Inference algorithm (\RobustMaxSMT{})}

%In order to have a complete algorithm and address Problem \ref{problem interval trajectory},
%we increase the size of the searched formula $n$ (starting from 1) until either of the two following stopping criteria is reached (Algorithm \ref{alg:RobustMaxSMT}).
Algorithm \ref{alg:RobustMaxSMT} shows the procedure of \RobustMaxSMT{}.
We increase the size of the searched formula $n$ (starting from 1) until either of the stopping criteria (described later) triggers.
In each iteration,
we first construct at line \ref{alg:RobustMaxSMT:line:construct} the formula of the structural constraints of the DAG (denoted by $\propFormula^{DAG}_n$).
%which is denoted by $\propFormula^{DAG}_n$, and is constructed by the conjunction of Eqs. \eqref{eq1} to \eqref{eq4}.
On top of it, we assign at line \ref{alg:RobustMaxSMT:line:objective} the objective function $Y^{\trace}$, defined in Eq. \eqref{eq:Y}.
We then use \OptSMT{} to get a model $\model$ of $\propFormula^{DAG}_n$ that maximizes $Y^{\trace}$ (line \ref{alg:RobustMaxSMT:line:infer}),
reconstruct the inferred formula
and evaluate the attained objective function value (line \ref{alg:RobustMaxSMT:line:reconstruct}).

%-- stopping criteria
The first stopping criteria is triggered when the maximum iteration $\maxiteration \in \nat^{+}$ (given as a parameter) is reached, which produces a formula of maximum size $\maxiteration$.
%$\maxiteration$ is given as a parameter of Algorithm \ref{alg:RobustMaxSMT}, and would be $+\infty$ if this criteria is not used.
The second stopping criteria is triggered when the robustness margin threshold $\minrobustness \in \real$ (given as a parameter) is reached.
%$\minrobustness$ is given as a parameter of Algorithm \ref{alg:RobustMaxSMT}, and would be $+\infty$ if this criteria is not used.

%-- problem solving: max iteration usage
%Note that introduction of $\minrobustness$ in Algorithm \ref{alg:RobustMaxSMT} deviates from Problem \ref{non-separable data prob 1}.
%With only the $\maxiteration$ stopping criteria, the loop of the algorithm could be ignored and we could directly start at $n = \maxiteration$.
%However, if the user has no idea about the desired formula size, he can provide a desired minimum robustness margin threshold $\minrobustness$ instead.
To solve Problem \ref{non-separable data prob 1}, one can set $\maxiteration$ to the predetermined positive integer described in this problem, and $\minrobustness=+\infty$ in order to ignore the second stopping criteria.
With only the $\maxiteration$ as the stopping criteria, the loop of the algorithm could be ignored and we could directly start at $n = \maxiteration$.
In that case, Algorithm \ref{alg:RobustMaxSMT} returns one of the formula of size $\maxiteration$ that maximizes $\ObjectiveSample(\IntervalSample, \formula)$ (such formula is not unique).

%-- min robustness usage 
When a finite $\minrobustness$ is specified, Algorithm \ref{alg:RobustMaxSMT} returns an STL formula with size possibly less than $\maxiteration$ but with $\ObjectiveSample(\IntervalSample, \formula_\model) \geq \minrobustness$.
This is particularly useful when the expected size of the STL formula is unknown and $\maxiteration=+\infty$.

%---------- Decision-trees ----------
\paragraph{Decision Trees over STL Formulas}

\begin{wrapfigure}{r}{0.4\linewidth}
	\centering
	\vspace{-1em}
	\begin{tikzpicture}
	\node (1) at (0, 0) {$\formula_1$};
	\node (2) at (-.8, -.7) {$\formula_2$};
	\node (3) at (.8, -.7) {$\ltrue$};
	\node (4) at (-1.4, -1.6) {$\ltrue$};
	\node (5) at (-.2, -1.6) {$\lfalse$};
	\draw[->]        (1) -- (2);
	\draw[->,dashed] (1) -- (3);
	\draw[->]        (2) -- (4);
	\draw[->,dashed] (2) -- (5);
	\end{tikzpicture}
	\caption{A decision tree over STL formulas}
	\label{fig:DT-example}
	\vspace{-0.5cm}
\end{wrapfigure}
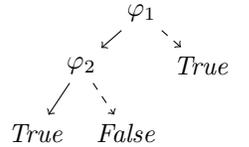

A decision tree over STL formulas is a tree-like structure where all nodes of the tree are labeled by STL formulas.
While the leaf nodes of a decision tree are labeled by either $\ltrue$ or $\lfalse$, the non-leaf nodes are labeled by (non-trivial) STL formulas which represent decisions for predicting the class of a trajectory.
Each inner node leads to two subtrees connected by edges, where the left edge is represented with a solid edge and the right edge with a dashed one.
Figure~\ref{fig:DT-example} depicts a decision tree over STL formulas.

A decision tree $\DT$ over STL formula corresponds to an STL formula $\formula_\DT \coloneqq \bigvee_{\rho\in\mathfrak{R}} \bigwedge_{\formula\in\rho} \formula^\prime$, where $\mathfrak{R}$ is the set of paths that originate in the root node and end in a leaf node labeled with $\ltrue$, $\formula^\prime = \formula$ if it appears before a solid edge in $\rho\in\mathfrak{R}$, and $\formula^\prime = \lnot\formula$ if it appears before a dashed edge in $\rho\in\mathfrak{R}$ (see Figure~\ref{fig:DT-example}).

%For evaluating a decision tree $\DT$ on a trajectory $\trace$, we use the valuation $\valFunc{\formula_\DT}{\trace}$ of the equivalent STL formula $\formula_\DT$ on $\trace$.
%We can, in fact, extend the valuation function and loss function for STL formulas to decision trees as $\valFunc{\DT}{\trace}=\valFunc{\formula_\DT}{\trace}$ and $\loss(\Traces,\formula)=\loss(\Traces, \formula)$.

\begin{algorithm}[th] % alg:RobustMaxSMTDT
    \small
	\DontPrintSemicolon
	
	\Input{
	    Sample $\IntervalSample=\{(\intervaltrace^i,\class_i)\}^{N_D}_{i=1}$ \newline
	    Maximum iteration $\maxiteration \in \nat^{+}$ \newline
	    Minimum robustness margin $\minrobustness \in \real$
	}
	%\textbf{Parameter:} Stopping criteria $\stopCriteria$
	\BlankLine
	
	%Infer formula $\formula$ using Algorithm \ref{alg:RobustMaxSMT} \label{line:infer_form} ( $\IntervalSample$, $\minrobustness$, $\maxiteration$ ) \;
	$\formula\gets$ Algorithm \ref{alg:RobustMaxSMT} $( \IntervalSample, \minrobustness, \maxiteration )$
	\label{alg:RobustMaxSMTDT:line:infer}\;
	
	Split $\IntervalSample$ into $\IntervalSample^{+}$, $\IntervalSample^{-}$ using $\formula$
	\label{line:split_sample}\;
	
	\uIf{$\stopCriteria(\IntervalSample^{+},\IntervalSample^{-})$\label{line:stop-criteria}}
		{
			\Return{$\leaf(\IntervalSample)$} \label{line:terminate}
		}
	\Else{
		
		%Infer trees $\DT_1$, $\DT_2$ by recursively applying Algorithm \ref{alg:RobustMaxSMTDT} to $\IntervalSample^{+}$ and $\IntervalSample^{-}$ \; \label{line:recursive-call}
		$\DT_1\gets$ Algorithm \ref{alg:RobustMaxSMTDT} $(\IntervalSample^{+}, \minrobustness, \maxiteration )$ 
		\label{line:rec_acc}\;
		$\DT_2\gets$ Algorithm \ref{alg:RobustMaxSMTDT} $(\IntervalSample^{-}, \minrobustness, \maxiteration )$ 
		\label{line:rec_rej}\;
		
		\Return{decision tree with root node $\formula$ and subtrees $\DT_{1}$, $\DT_{2}$}
		\;
		}
	\caption{\RobustMaxSMTDT{}}
	\label{alg:RobustMaxSMTDT}
\end{algorithm}

%---------- TLI-UA-DT ----------
\paragraph{Decision Tree Variant of \RobustMaxSMT{} (\RobustMaxSMTDT{})}

We propose this second method for uncertainty aware STL inference based on decision trees,
outlined by Algorithm \ref{alg:RobustMaxSMTDT}.

First, we infer an STL formula using \RobustMaxSMT{} (line \ref{alg:RobustMaxSMTDT:line:infer}).
%We then split the input set $\IntervalSample$ into two sub-sets $\IntervalSample^{+}$ and $\IntervalSample^{-}$ (line \ref{line:split_sample}).
Given the inferred formula $\formula$, % at line \ref{alg:RobustMaxSMTDT:line:infer},
we need a way to split $\IntervalSample$ into two labeled sets $\IntervalSample^{+}$ and $\IntervalSample^{-}$.
Note that the set of $\intervaltrace^i$ with label $\class_{i}=\posclass$ that strongly satisfies $\formula$ and the set of $\intervaltrace^{\hat{i}}$ with the label $\class_{\hat{i}}=\negclass$ that strongly violates $\formula$ do not necessarily partition the  $\IntervalSample$.
As an alternative, we choose to split $\IntervalSample$ with respect to an averaged robustness margin as in Eqs. \eqref{eq:split-accintervalsample} and \eqref{eq:split-rejintervalsample}, in order to have a partition (line \ref{line:split_sample}):
\begin{align}
    \IntervalSample^{+} ={}&
    \left\{
	    \left( \intervaltrace, \class \right) \in \IntervalSample
	    \middle|
	    \vphantom{\frac{\intervaltrace}{2}} \right. \\ & \left.
	    \frac{
	        \underline{r}(\intervaltrace,\formula,t_0)
	        +
	        \overline{r}(\intervaltrace,\formula,t_0)
	    }{2} > 0
	\right\}
	\label{eq:split-accintervalsample}
	\\
	\IntervalSample^{-} ={}&
	\IntervalSample \setminus \IntervalSample^{+}
	\label{eq:split-rejintervalsample}
\end{align}
Based on $\IntervalSample^{+}$ and $\IntervalSample^{-}$, Algorithm \ref{alg:RobustMaxSMTDT} is applied recursively (lines \ref{line:rec_acc} and \ref{line:rec_rej}), if Algorithm \ref{alg:RobustMaxSMTDT} does not terminate at lines \ref{line:stop-criteria} and \ref{line:terminate}.
We define the stopping criteria, $\stopCriteria(\IntervalSample^{+},\IntervalSample^{-})$, for Algorithm \ref{alg:RobustMaxSMTDT} as the following: if $\IntervalSample^{+}=\emptyset$ or $\IntervalSample^{-}=\emptyset$, then $\stopCriteria(\IntervalSample^{+},\IntervalSample^{-})=True$; otherwise, $\stopCriteria(\IntervalSample^{+},\IntervalSample^{-})=False$.
This stopping criteria guarantees the termination of this method,
as the sample size decreases at each split until no split is possible anymore.

% We define the stopping criteria, $\stopCriteria(\IntervalSample)$, for the decision tree algorithm as the following: if $\IntervalSample^{+}=\emptyset$ or $\IntervalSample^{-}=\emptyset$ (where $\IntervalSample^{+}$ and $\IntervalSample^{-}$ are computed from $\IntervalSample$ according to Eqs. \eqref{eq:split-accintervalsample} and \eqref{eq:split-rejintervalsample}), then $stop=True$; otherwise, $stop=False$.
% This stopping criteria guarantees the termination of this method,
% as the sample size decreases at each split until no split is possible anymore.

% We should note that we have no guarantee on the robustness value of the inferred decision tree.

% $\{\IntervalSample^{+}, \IntervalSample^{-}\} = \{\IntervalSample, \emptyset\}$

%---------- Evaluation ----------
\section{Experimental Evaluation }
\label{experimental evaluation}

%In this section, we evaluate the performance of our learning algorithm and compare it with STL inference using trajectories.
%Figure \ref{shematic} shows a schematic comparison between the two problems mentioned in section \ref{problem formulation}. The upper row depicts Problem \ref{problem trajectory} and %using  Definition \ref{trajectory def} as the solution to find an STL formula to classify the trajectories. In the upper row of this figure, each circle represents a trajectory. The %trajectories in the positive set (blue) are denoted by $P$ and the trajectories in the negative set (red) are denoted by $N$. The lower row of Figure \ref{shematic} exhibits Problem %\ref{problem interval trajectory} and its solution. In the lower part of this figure, trajectories with interval trajectories are shown with light blue (positive) and light red %(negative) squares where subscript \textit{unc} represents \textit{uncertainty}. The lower  Problem is solved using Definition \ref{interval trajectory def} which essentially deals %with a smaller-sized sample input.\par
%\begin{figure}[h]
%    \centering
%        \includegraphics[scale=0.3]{Math Development and Backup Files/figures/shematic.png} % second figure itself
%        \caption{We solve Problem 1, we need to classify finitely many trajectories by inferring an STL formula while for solving Problem 2, we need to classify infinitely many trajectories within the interval trajectories by finding an STL formula.  }
%        \label{shematic}
%\end{figure}\par

In this section, we evaluate the performance of the uncertainty-aware proposed algorithms.
In the following, we compare \RobustMaxSMT{} with \SamplingMaxSMT{}, and compare \RobustMaxSMTDT{} with \SamplingMaxSMTDT{}.
We implement all following four algorithms in a C++ toolbox\footnote{\url{https://github.com/cryhot/uaflie}} using Microsoft Z3 \cite{10.5555/1792734.1792766}:
\begin{itemize}
    \item \SamplingMaxSMT{}:
    %\JRG{params: max iteration $\maxiteration$}
    \MaxSMT{}-based algorithm on finitely many randomly sampled trajectories within the interval trajectories.
    (first baseline);
    
    \item \SamplingMaxSMTDT{}:
    %\JRG{params: min node correctly classified traces percentage $\minclassification$ (unused: max node iteration $\maxiteration$)}
    Decision tree variant of \SamplingMaxSMT{} (second baseline);
    
    \item \RobustMaxSMT{}:
    %\JRG{params: max iteration $\maxiteration$}
    Uncertainty-aware \OptSMT{}-based algorithm on interval trajectories
    (first proposed algorithm);
    
    \item \RobustMaxSMTDT{}:
    %\JRG{params: max node iteration $\maxiteration$, min node robustness $\minrobustness$}
    Decision tree variant of \RobustMaxSMT{}
    (second proposed algorithm).
\end{itemize}

\subsection{Numerical Evaluation}
For STL inference using \SamplingMaxSMT{} and \SamplingMaxSMTDT{}, we randomly sample a certain number of trajectories from each interval trajectory in the dataset $\IntervalSample$. 
%Figure \ref{encode and sample example} shows a sample set $S$ with two 2D interval trajectories in set $P$ and one 2D interval trajectory in set $N$, and each interval trajectory has the length of 5. We randomly sample a certain number of trajectories from interval trajectories in $N$ and store them in $N_s$, and randomly sample a certain number of trajectories from $P$ and store them in $N_s$. 
For STL inference using \RobustMaxSMT{} and \RobustMaxSMTDT{}, we directly encode the interval trajectories to the \OptSMT{} solver. 
%The inferred formula that classifies $P_s$ and $N_s$ is $\lglobally(x^1>3.03)\lor{F}(x^2<0.605)$ with the execution time of 570 s. The inferred formula that directly classifies $P_{unc}$ and $N_{unc}$ is $F(x^2<0.749)\lor{\lglobally}(x^1>3.8)$ with the execution time of 0.08 s. It can be noted that using Definition \ref{interval trajectory def} reduced the execution time by 4 orders of magnitudes.\par
%\begin{figure}[!ht]
%
%    \centering
%        \includegraphics[scale=0.25]{Math Development and Backup Files/figures/encode and sample.pdf} % second figure itself
%        \caption{For comparing the performance of our tool against STL approach for classifying trajectories, we use two methods to prepare the input dataset. Right: in method 1, we randomly sample finitely many trajectories within the interval trajectories. Left: in method 2, we directly encode the interval trajectories.}
%        \label{encode and sample}
%\end{figure}\par
%\begin{figure}[!ht]

%    \centering
%        \includegraphics[scale=0.25]{Math Development and Backup Files/figures/sampling and encoding.pdf} % second figure itself
%        \caption{This example illustrates how we prepare a given dataset using method 1 and 2 to infer an STL formula. Right: In uncertainty-aware STL approach, we directly encode the interval trajectories. Left: we sample 100 trajectories from each labeled class of the given dataset to infer an STL formula to classify set P from N. Blue interval trajectories are labeled P and red uncertainty interval is labeled N.}
%        \label{encode and sample example}
%\end{figure}\par
To evaluate the performance of the uncertainty-aware proposed algorithms, we generate 10 datasets. Among these datasets, five datasets are non-separable, and five datasets are separable. In each dataset, both of the sets $P_{unc}$ and $N_{unc}$ contain up to three interval trajectories with the time length up to 10. We use these 10 datasets to infer STL formulas by exploiting algorithms \RobustMaxSMT{} and \RobustMaxSMTDT{}. The inferred STL formulas by \RobustMaxSMT{} are listed in table \ref{formulas tables} with the corresponding optimal worst-case robustness margins denoted by superscript *. Then, we sample 200 trajectories from each interval trajectory in each dataset to infer STL formulas from \SamplingMaxSMT{} and \SamplingMaxSMTDT{} algorithms. First, we evaluate and compare the performance of \RobustMaxSMT{} and \SamplingMaxSMT{}. We choose 1000 seconds for the timeout on each execution. For each individual dataset, we use the same values of parameter \maxiteration~ for both \RobustMaxSMT{} and \SamplingMaxSMT{} when inferring STL formulas for the datasets. The comparison of the execution time of these two algorithms can be seen in Figure \ref{UA-RS}. The results show that the execution time of \RobustMaxSMT{} is at most $1/100$ of the the execution time of \SamplingMaxSMT{} (for a dataset with 800 sampled trajectories in total). Next, we compare \RobustMaxSMTDT{} and \SamplingMaxSMTDT{}. We use same values of parameter \maxiteration{} for both of the \RobustMaxSMT{} and \RobustMaxSMTDT{}. 
%We used the returned ${\underline{r}(\intervaltrace_{10}, \formula, t_j)}$ value from \RobustMaxSMT{} as an input to \textit{min} robustness parameter in \RobustMaxSMTDT{}.
The values of parameter \minclassification{} for \SamplingMaxSMTDT{} are within the range $[0,1]$ (see Appendix \ref{ba}). Figure \ref{UA_DT-RS_DT} presents the comparison between the execution time of \RobustMaxSMTDT{} and \SamplingMaxSMTDT{}. \RobustMaxSMTDT{} outperforms \SamplingMaxSMTDT{} in terms of execution time up to four orders of magnitudes (for a dataset with 800 sampled trajectories in total). In the next step, we asses the effectiveness of exploiting decision tree in \RobustMaxSMT{} and \SamplingMaxSMT{}. Figure \ref{RS-RS-DT} presents the comparison between the execution time of \SamplingMaxSMT{} and \SamplingMaxSMTDT{}. The results show that the execution time of \SamplingMaxSMTDT{} is at most $1/14$ of the the execution time of \SamplingMaxSMT{} (for a dataset with 800 sampled trajectories in total). Figure \ref{UA-UA-DT} shows the comparison between the execution time of \RobustMaxSMT{} and \RobustMaxSMTDT{}. \RobustMaxSMTDT{} outperforms \RobustMaxSMT{} by inferring STL formulas faster. The execution time of \RobustMaxSMTDT{} is at most $1/88$ of the execution time of \RobustMaxSMT{}.
\begin{figure}[h]
     \begin{subfigure}[b]{0.22\textwidth}
         \includegraphics[scale=0.17]{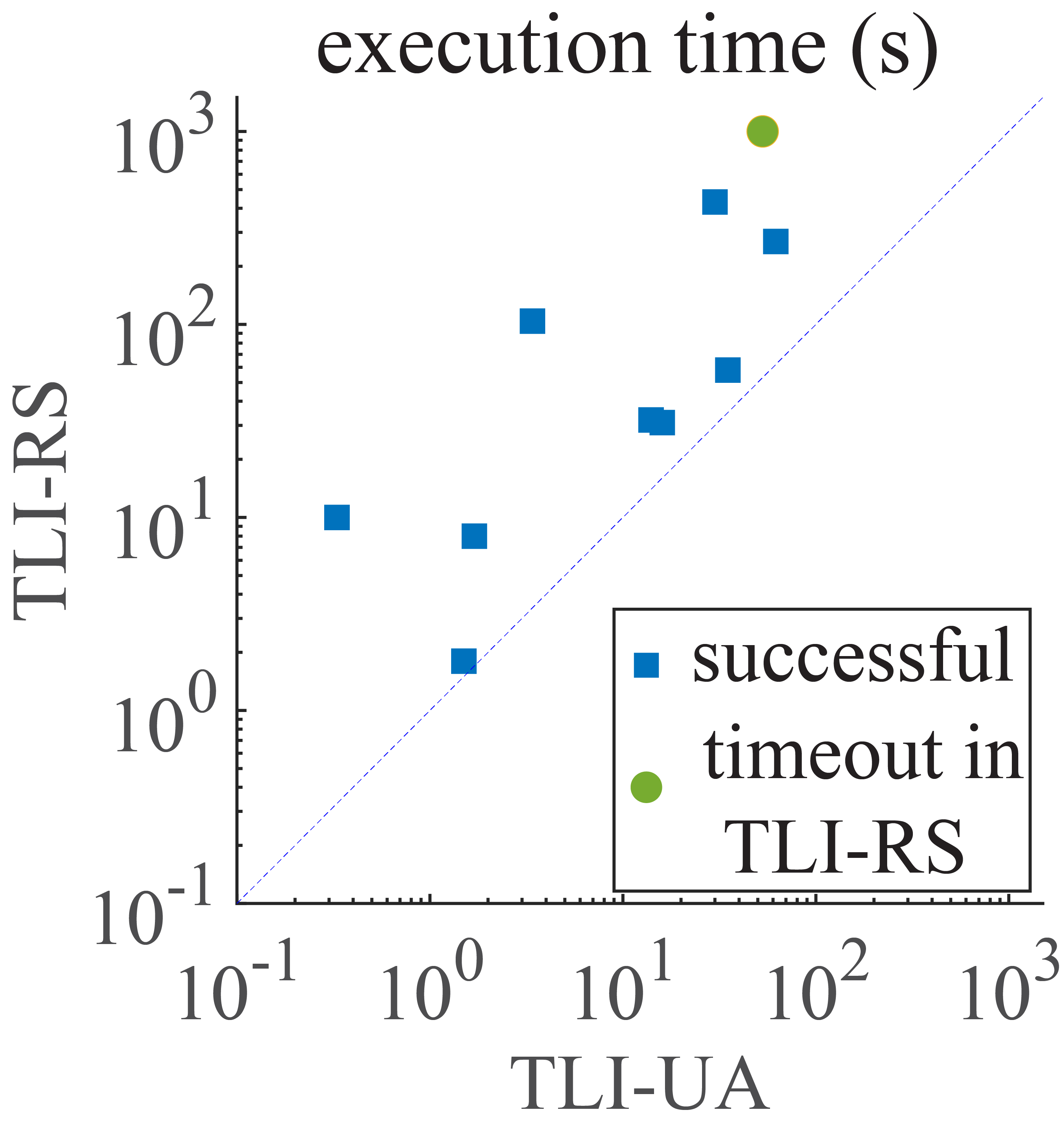}
      \caption{}
        \label{UA-RS}
    \end{subfigure}
    \hfill
   \begin{subfigure}[b]{0.22\textwidth}
      \includegraphics[scale=0.17]{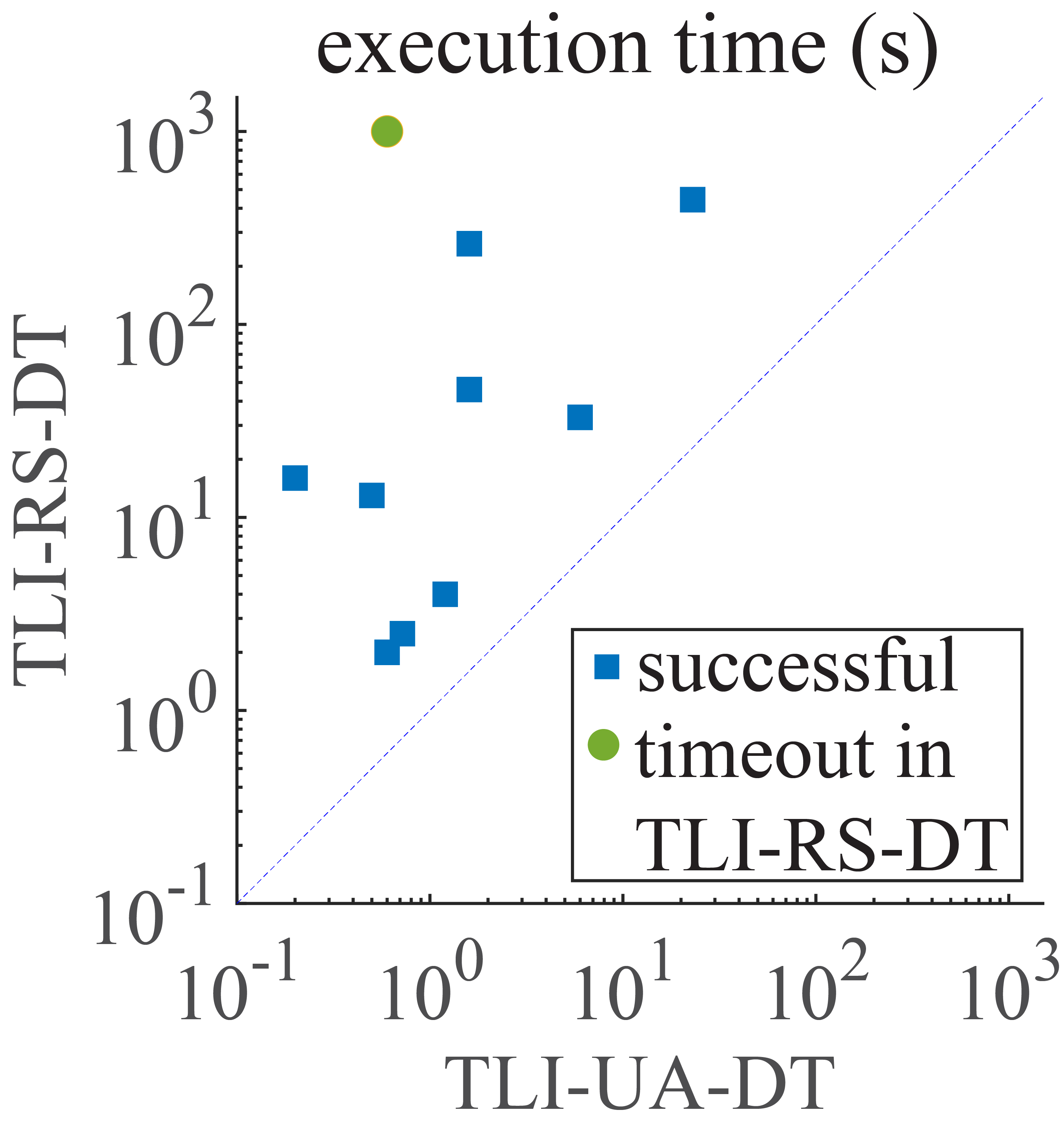}
       \caption{}
        \label{UA_DT-RS_DT}
   \end{subfigure}
   \caption{The comparison of the execution time between: a) \RobustMaxSMT{} and \SamplingMaxSMT{}, where the execution time of \RobustMaxSMT{} is at most $1/100$ of the execution time of \SamplingMaxSMT{} (for a dataset with 800 sampled trajectories in total), and b) between \RobustMaxSMTDT{ and \SamplingMaxSMTDT{}}, where \RobustMaxSMTDT{} reduces the execution time up to four orders of magnitudes (for a dataset with 800 sampled trajectories in total). }\end{figure}
   
   \begin{figure}[h]
     \begin{subfigure}[b]{0.22\textwidth}
         \includegraphics[scale=0.17]{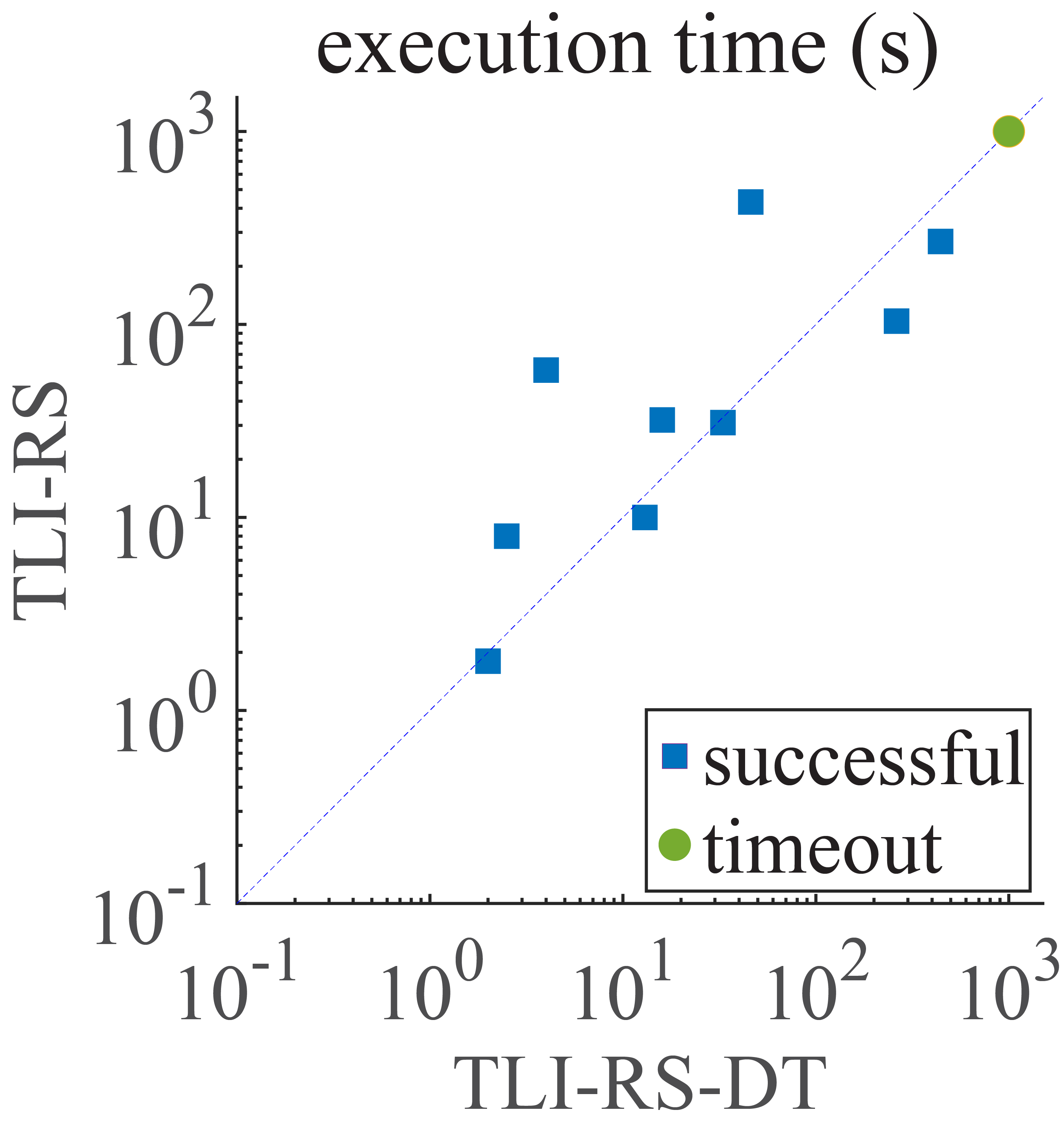}
      \caption{}
        \label{RS-RS-DT}
    \end{subfigure}
    \hfill
   \begin{subfigure}[b]{0.22\textwidth}
      \includegraphics[scale=0.17]{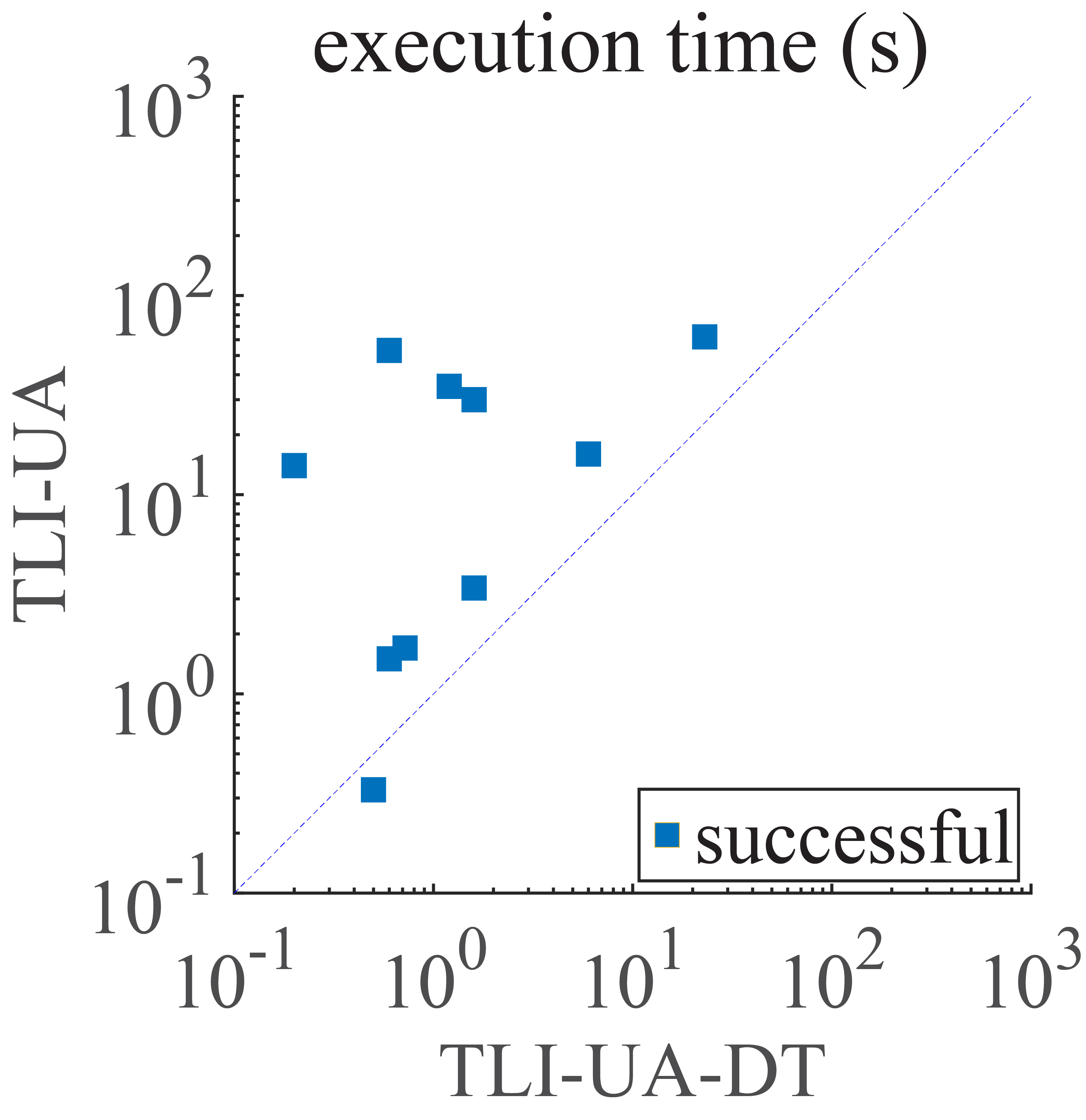}
       \caption{}
        \label{UA-UA-DT}
   \end{subfigure}
   \caption{The effectiveness of adding decision tree to \SamplingMaxSMT{} and \RobustMaxSMT{}: a) The execution time of \SamplingMaxSMTDT{} is at most $1/14$ of the execution time of \SamplingMaxSMT{} (for a dataset with 800 sampled trajectories in total). b) The execution time of \RobustMaxSMTDT{} is at most $1/88$ of the execution time of \RobustMaxSMT{}.}\end{figure}
% \begin

%\renewcommand{\arraystretch}{1.5}
%\addtolength{\tabcolsep}{-5pt}
%\begin{tabular}{c c c} \toprule
%    {inferred STL  } & {${\underline{r}^{*}(\intervaltrace, \formula,t_j)}$} &  \\
    
%   {formulas by \RobustMaxSMT{}} ($\formula$) & \\\midrule
%    \multicolumn{3}{c}{Non-separable samples}\\ \midrule
%    $({x^1}<0.5)\luntil_{[8,10)}(x^1>0.5)$ & -4.5  \\
%    $(x^1<10)\wedge({x^1-x^2>8.9)}$   & -10 \\
%    $\lglobally_{[9,10)}(x^1-x^2>4.5)$ & -5     \\
%   $\leventually_{[0,2)}(x^1+x^2>8)\limplies(x^1+x^2>8)$ & -4\\
%5   $\leventually_{[1,10)}(\lnot{x^2<4.5)}$ & -0.5  \\  \midrule
%       \multicolumn{3}{c}{Separable samples}\\ \midrule
%    $\leventually_{[6,10)}(x^1+x^2<20)$  & 0  \\
%
%    $\leventually_{[3,10)}(x^1-x^2<0)\limplies(x^1-x^2<0)$  & 0 \\

%    $(x^1>10)\vee(x^2<-12)$\ & 0    \\

 %   $\leventually_{[3,4)}(x^1+x^2>15.5)$  & 2.5    \\ 
 %   $\lglobally(x^1>2.5)$  & 1.6    \\
 %   \bottomrule
%\end{tabular}\par
\vspace{-10pt}
\begin{table}
\renewcommand{\arraystretch}{1.5}
\addtolength{\tabcolsep}{-5pt}
\begin{tabular}{c| c c} \toprule
    data type~~~~&{inferred STL  } & {${\underline{r}^{*}(\intervaltrace, \formula,t_0)}$}   \\
    
   &{formulas by \RobustMaxSMT{}} ($\formula$) & \\\midrule
       &   $\leventually_{[6,10)}(x^1+x^2<20)$  & 0  \\\cline{2-3}

    separable~~~& $\leventually_{[3,10)}(x^1-x^2<0)$  & 0 \\
    datasets          & $\limplies(x^1-x^2<0)$   &    \\\cline{2-3}
                  &$(x^1>10)\vee(x^2<-12)$     & 0    \\\cline{2-3}

                  &$\leventually_{[3,4)}(x^1+x^2>15.5)$  & 2.5    \\ \cline{2-3}
                  &$\lglobally(x^1>2.5)$               & 1.6    \\\midrule
                &$({x^1}<0.5)\luntil_{[8,10)}(x^1>0.5)$ & -4.5  \\\cline{2-3}
 non-separable~~~& $(x^1<10)\wedge({x^1-x^2>8.9)}$   & -10 \\\cline{2-3}
           datasets  & $\lglobally_{[9,10)}(x^1-x^2>4.5)$ & -5     \\\cline{2-3}
                  & $\leventually_{[0,2)}(x^1+x^2>8)$ & -4\\
                  & $\limplies(x^1+x^2>8)$               & \\\cline{2-3}
                  &$\leventually_{[1,10)}(\lnot{x^2<4.5)}$ & -0.5  \\  
    \bottomrule
\end{tabular}\par

\caption{The inferred STL formulas by \RobustMaxSMT{}, and the corresponding worst-case robustness margins for each of the 10 generated datasets. }\label{formulas tables}
\end{table}
\vspace{-10pt}
\subsection{Strategy Inference of Pusher-Robot Scenario}
 In this case-study, we infer uncertainty-aware STL formulas to describe the behavior of the interval trajectories of a Pusher-robot. The interval trajectories are generated by policies learned from reinforcement learning (RL) using \textit{model-based reinforcement learning} (MBRL) algorithm \cite{Nagabandi}. The intervals represent the uncertainties associated with the policies and the environment. This robot consists of two components denoted as the  \textquotedblleft{forearm}\textquotedblright~ and the \textquotedblleft{upper arm}\textquotedblright~ (Figure \ref{pusher robot shematic}). In this paper, we investigate four different strategies of this Pusher-robot: 1) Tap the ball toward the wall. 2) Tap the ball, rotate around, and stop the ball. 3) Tap the ball, stop the ball with the upper arm. 4) Bounce the ball off the wall. 
%The visualisation of these strategies can be found in Appendix \ref{pusher robot}. 
We infer an STL formula for each strategy versus the other three strategies based on the change of the speed ($m/{s}$) of the ball during performing the current strategy which is denoted by $x^1$. For the dataset, $P_{unc}$ includes the interval trajectories of the current strategy, and $N_{unc}$ includes the interval trajectories of the other three strategies. The inferred STL formulas are presented in Table \ref{strategies}. If $x^1<0$, the contact (the contact between the robot and the wall or contact between the ball and the ball) absorbs momentum of the ball, and if $x^1>0$, then the contact adds momentum to the ball.\par
 
\begin{figure}[h]
    \centering
        \includegraphics[scale=0.15]{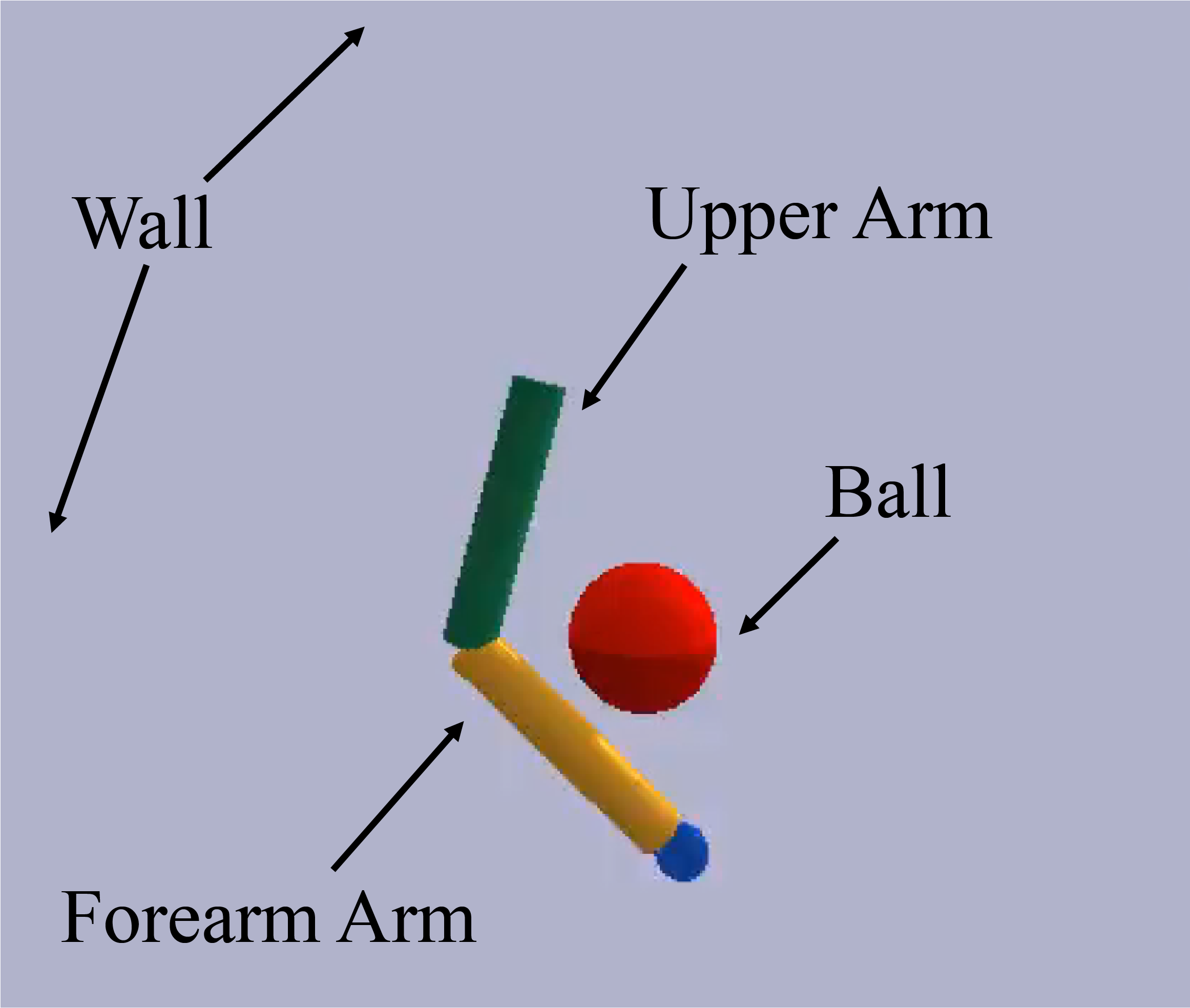} % second figure itself
       \caption{The schematic of the Pusher-robot with the two components denoted as the forearm and the upper arm.}
        \label{pusher robot shematic}
\end{figure}

\begin{table}
\renewcommand{\arraystretch}{1.5}
\addtolength{\tabcolsep}{-2pt}
\begin{tabular}{c| c c} \toprule
   {strategy} & inferred STL & {${\underline{r}^{*}(\intervaltrace, \formula,t_0)}$}\\ 
    & formula by \RobustMaxSMT{} (\formula)&\\ \hline
   \multirow{ 2}{*}{strategy 1} & $(x^1>-0.232)$ &  \multirow{ 2}{*}{-0.848} \\
    &$\luntil_{[2,4)}(x^1>-0.232)$ &\\\cline{2-3}
     \multirow{ 2}{*}{strategy 2}   & $(x^1>0.21)$ &  \multirow{ 2}{*}{-0.4} \\
      & $\luntil_{[0,3)}(\lglobally_{[1,3)}(x^1>0.21))$ & \\\cline{2-3}
    {strategy 3} & $\lglobally_{[1,4)}(x^1>0.266)$ & -0.35     \\\cline{2-3}
 \multirow{ 2}{*}{strategy 4}  & $(x^1>0.231)$&   \multirow{ 2}{*}{-0.385}\\
  & $\luntil_{[4,5)}(x^1>0.231)$ &\\
   \bottomrule
\end{tabular}\par
\caption{The inferred STL formulas by \RobustMaxSMT{} for strategies 1 to 4, and the corresponding worst-case robustness for feature $x^1$ (the change in the speed of the ball during performing the current strategy).}\label{strategies}
\end{table}\par
%\addtolength{\tabcolsep}{-2pt}
%\begin{tabular}{c| c c} \toprule
%    {strategy} & inferred STL & {${\underline{r}^{*}(\intervaltrace, \formula,t_0)}$}\\ 
%    & formula by \RobustMaxSMT{}&\\ \hline
%    \multirow{ 2}{*}{strategy 1} & $(x^1>-0.232)$ &  \multirow{ 2}{*}{-0.848} \\
%    &$\luntil_{[2,4)}(x^1>-0.232)$ &\\\cline{2-3}
%      \multirow{ 2}{*}{strategy 2}   & $(x^1>0.21)$ &  \multirow{ 2}{*}{-0.4} \\
%      & $\luntil_{[0,3)}(\lglobally_{[1,3)}(x^1>0.21))$ & \\\cline{2-3}
%    {strategy 3} & $\lglobally_{[1,4)}(x^1>0.266)$ & -0.35     \\\cline{2-3}
%   \multirow{ 2}{*}{strategy 4}  & $(x^1>0.231)$&   \multirow{ 2}{*}{-0.385}\\
%   & $\luntil_{[4,5)}(x^1>0.231)$ &\\
%   \bottomrule
%\end{tabular}\par
%\addtolength{\tabcolsep}{-5pt}
%\begin{tabular}{c| c c} \toprule
%    {strategy} & inferred STL & {${\underline{r}^{*}(\intervaltrace, \formula,t_0)}$}\\ 
%    & formula by \RobustMaxSMT{}&\\ \hline
%    {strategy 1} & $(x^1>-0.232)\luntil_{[2,4)}(x^1>-0.232)$ &  {-0.848} \\

%      {strategy 2}   & $(x^1>0.21)\luntil_{[0,3)}(\lglobally_{[1,3)}(x^1>0.21))$ &  {-0.4} \\
%    {strategy 3} & $\lglobally_{[1,4)}(x^1>0.266)$ & -0.35     \\
%     {strategy 4}  & $(x^1>0.231)\luntil_{[4,5)}(x^1>0.231)$ &  {-0.385}\\\bottomrule

%\end{tabular}

The interpretations of the inferred STL formulas for the strategies, respectively from 1 to 4, are: 1) The change in the speed of the ball is greater than -0.232 $m/{s}$, until sometime from time-step 2 to time-step 4, the change in the speed of the ball is greater than -0.232 $m/{s}$ (transition from losing momentum to gaining momentum). 2) The change in the speed of the ball is greater than 0.21  $m/{s}$, until some time from time-step 0 to time-step 3, the change of the speed of the ball is always greater than 0.21 $m/{s}$ (gaining momentum from time-step 0 to time-step 3). 3) Some time from time-step 1 to time-step 4, the change of the speed of the ball is always greater than 0.266 $m/{s}$ and gaining momentum. 4) The change in the speed of the ball is greater than 0.231 $m/{s}$, until sometime from time-step 4 to time-step 5, the change in the speed of the ball is greater than 0.231 $m/{s}$ (gaining momentum).

%---------- Concluion ----------
\section{Conclusion}
In this paper, we proposed two uncertainty-aware STL inference algorithms. Our results showed that uncertainty-aware STL inference expedites the inference process in the presence of uncertainties. Exploiting uncertainty-aware STL inference to enhance RL is one future research direction (in a similar manner as in \cite{ICAPS20paper116}). Moreover, we aim to develop the proposed learning algorithms to uncertainty-aware \textit{graph temporal logic} (GTL) inference, where we can infer spatial temporal properties from data with uncertainties\cite{Xugti}.
\label{conclusion}

% For instance, uncertainty-aware STL formulas can be used to define reward machines that output customized reward functions . 

  \section{Acknowledgment}
 The authors thank Dr. Rebecca Russell and the entire ALPACA team for their collaboration. This material is based upon work supported by the Defense Advanced Research Projects Agency (DARPA) under Contract No. HR001120C0032. Any opinions, findings and conclusions or recommendations expressed in this material are those of the author(s) and do not necessarily reflect the views of DARPA.

% Any opinions, findings and conclusions or recommendations expressed in this material are those of the author(s) and do not necessarily reflect the views of the sponsoring agency.
\titleformat{\section}{\centering\normalfont\scshape}{\appendixname~\thesection }{0em}{~}
\bibliographystyle{IEEEtran}
\bibliography{zherefclean_submit}

\titleformat{\section}{\centering\normalfont\scshape}{\appendixname~\thesection }{0em}{~}

\clearpage
\appendices
 \section{supplementary Mathematical materials}\label{appendix:math}

\subsection{Proof of Theorem \ref{thm:perfect-classification-2traj}}
\label{apdx:prf:perfect-classification-2traj}
For two given disjoint intervals, $[\underline{a},\overline{a}]$ and $[\underline{b},\overline{b}]$, we know that $\underline{a}\leq\overline{a}$ and $\underline{b}\leq\overline{b}$. If $\overline{a}<\underline{b}$, then we define $[\underline{a},\overline{a}]<[\underline{b},\overline{b}]$; or if $\overline{b}<\underline{a}$, then $[\underline{a},\overline{a}]>[\underline{b},\overline{b}]$.\par
%Given a set of labeled interval trajectories $\IntervalSample=\{(\intervaltrace^i,\class_i)\}^{N_D}_{i=1}$, 
For two interval trajectories,
$\intervaltrace^i$ with label $\class_i=\posclass$ and 
$\intervaltrace^{\Tilde{i}}$ with label $\class_{\Tilde{i}}=\negclass$,
if there exists one time-step $t_j$ and one dimension $k$ such that $\intervaltrace[j][k]^{\Tilde{i}}\cap\intervaltrace[j][k]^i=\emptyset$;
then, either $\intervaltrace[j][k]^i>\intervaltrace[j][k]^{\Tilde{i}}$ or $\intervaltrace[j][k]^i<\intervaltrace[j][k]^{\Tilde{i}}$. Without loss of generality, we take $\intervaltrace[j][k]^i>\intervaltrace[j][k]^{\Tilde{i}}$. Moreover, we know that $\overline{\trace}_j^k\in\intervaltrace[j][k]^{\Tilde{i}}$ and $\underline{\trace}_j^k\in\intervaltrace[j][k]^{{i}}$ are real numbers.
%$\{\forall{x^k(t_j)\in}\intervaltrace[j][k]^{\Tilde{i}}\cup{\intervaltrace[j][k]^i}~|~ x^k(t_j)\in\mathbb{R}\}$.
%$\{\forall{\trace[j][k]\in}\intervaltrace[j][k]^{\Tilde{i}}\cup{\intervaltrace[j][k]^i}~|~ \trace[j][k]\in\mathbb{R}\}$ 

If we represent $\overline{\trace}_j^k\in\intervaltrace[j][k]^{\Tilde{i}}$ by $\Tilde{d}$, and represent $\underline{\trace}_j^k\in\intervaltrace[j][k]^{{i}}$ by $d$,
we know for any two real numbers $\{d,\Tilde{d}~|~ d>\Tilde{d}\}$, there is a real value $\delta=\frac{\Tilde{d}+d}{2}$ such that $d>\frac{\Tilde{d}+d}{2}>\Tilde{d}$. Then, we can conclude that $\intervaltrace[j][k]^{i}>\delta$.
Therefore, there exists at least one STL formula in the form of $\leventually_{[t_j,t_{j+1})}(x^k>\delta)$ that perfectly classifies the two interval trajectories.
%real value $\delta\in\mathbb{R}$ such that .
%without loss of generality, if we assume that $\intervaltrace[j][k]^{\Tilde{i}}<\intervaltrace[j][k]^i$. since $\forall{x^k(t_j)\intervaltrace[j][k]^{\Tilde{i}}}$ then there is a real value $\delta\in\mathbb{R}$ such that $\intervaltrace[j][k]^{\Tilde{i}}<\delta<\intervaltrace[j][k]^i$; thus, there is at least one STL formula in the form of $F_{[t_j,t_{j+1})}(\x^k>\delta)$ that perfectly classifies set $P_{unc}$ and $N_{unc}$

\subsection{Proof of Theorem \ref{thm:perfect-classification-sample}}
\label{apdx:prf:perfect-classification-sample}
Given a labeled set of interval trajectories $\IntervalSample=\{(\intervaltrace^i,\class_i)\}^{N_D}_{i=1}$, we represent interval trajectories with label  
$\class_{{i}}=\posclass$ by $\intervaltrace^i$ and represent interval trajectories with label $\class_{{i'}}=\negclass$ by $\intervaltrace^{{i'}}$.

By relying on Theorem \ref{thm:perfect-classification-2traj}, if all the pairs of interval trajectories $\intervaltrace^i$ and $\intervaltrace^{i'}$ are separable, then a formula $\formula_{ii'}$ can be found that is strongly satisfied by interval trajectories $\intervaltrace^i$ with the label $\class_i=\posclass$ and is strongly violated by interval trajectories $\intervaltrace^{i'}$.
This formula can be in the form of $\formula=\biglor_{(\intervaltrace^i, \posclass) \in \IntervalSample} \bigland_{(\intervaltrace^{i'}, \negclass) \in \IntervalSample}\formula_{ii'}$.
This formula can strongly classify the two sets of interval trajectories since $\formula$ is strongly satisfied by all $\intervaltrace^i$ and strongly violated by all $\intervaltrace^{i'}$.

%$\formula=(\formula_{11}\wedge\formula_{12}\wedge...\wedge\formula_{1N_N})\lor(\formula_{21}\wedge\formula_{22}\wedge...\wedge\formula_{2N_N})\lor...(\formula_{{N_P}1}\wedge\formula_{{N_P}2}\wedge...\wedge\formula_{{N_P}{N_N}})$
%So \({{\eta_u} _{{{ t}_1}}}\)strongly satisfies the SLtL formula \(\formula  = {G_{[\min ({t_0},t* - \delta ),\max ({{ t}_2},t* + \delta ))}}({{\eta_u} _i} > {{\eta_u} _{{{ t}_{1,i}}}}(t*) - \varepsilon )\) while \({{\eta_u} _{{{ t}_{2,i}}}}\)strongly violates  \(\formula \) as \({{\eta_u} _{{{ t}_{2,i}}}}(t*) \le {{\eta_u} _{{{ t}_{1,i}}}}(t*) - \varepsilon \); thus we can conclude that formula \(\formula \) can perfectly classify \({{\eta_u} _{{{ t}_{1,i}}}}(t*)\) and \({{\eta_u} _{{{ t}_{2,i}}}}(t*)\). Similarly, it can be shown that there exists \(\delta ' \in \mathbb{N}\) such that \(\formula ' = {F_{[\min ({t_0},t* - \delta '),\max ({{ t}_2},t* + \delta '))}}({{\eta_u} _i} > {{\eta_u} _{{{ t}_{2,i}}}}(t*) + \varepsilon )\) perfectly classifies \({{\eta_u} _{{{ t}_{1,i}}}}(t*)\) and \({{\eta_u} _{{{ t}_{2,i}}}}(t*)\).\par
%Only if: according to definition 1 and 2, \(({{\eta_u} _{{ t},}}{t})|{ = _s}\pi \) \(iff\) \({t} \le { t}\) and \(\)\(f({\eta_u} ({t})) > 0\);

%================================================
\section{Baseline Algorithms}\label{ba}

Algorithm \ref{alg:MaxSMT} and Algorithm \ref{alg:MaxSMTDT}
present the incomplete procedures for the two baseline algorithms \SamplingMaxSMT{} and \SamplingMaxSMTDT{}, respectively.
The complete procedures would include the following first step:
construct $\Sample$ from finitely  many randomly sampled  trajectories  within the  interval trajectories $\IntervalSample$.

\paragraph{Baseline Temporal Logic Inference algorithm (\SamplingMaxSMT{})}

\begin{algorithm}[th] % alg:MaxSMT
	\small
	\Input{
	    Sample $\Sample=\{({\trace}^i,\class_i)\}^{N_D}_{i=1}$ \newline
	    Minimum classification $\minclassification \in [0,1]$ \newline %\JRG{default=1}
	    Maximum iteration $\maxiteration \in \nat^{+}$ %\JRG{default=+infinity}
	}
	\DontPrintSemicolon
	$n\gets 0$\;
	%\BlankLine
	{
	    \Repeat{$s \geq \minclassification$ or $ n > \maxiteration$ }{
	        $n\gets n+1$\;
	    
			%Construct formula $\propFormulaOrig{\specDepth}=\dagConst{\specDepth}\land\satisfactionConst{\specDepth}$\;
			Construct formula $\propFormula^{DAG}_{\specDepth} \land \propFormula^{\Sample}_{\specDepth}$
			\;
			
			%\JRG{maximizing $r = \underline{r}(\IntervalSample, \formula_v, t_0)$}\;
			Assign weights to soft constraints %$\propFormula^{\Sample}_{\specDepth}$
			\;
			
			%Assign weights to soft constraints in $\propFormulaOrig{\specDepth}$:\\ \qquad$\wt(\y{\specDepth}{0}{\trace})=\wttrace(\trace)$ for$(\trace,1)\in\Traces$, and $\wt(\neg\y{\specDepth}{0}{\trace})=\wttrace(\trace)$ for $(\trace,0)\in\Traces$\;
			
			Find model $v$ using \MaxSMT{} solver
			(or \SMT{} solver if $\minclassification=1$)
			\;
			
			Construct $\formula_v$ and evaluate $s\gets
			%min_{i} (\ObjectiveOneTraj(\intervaltrace^i, \class_i, \formula_v))
			%\ObjectiveSample(\IntervalSample, \formula_v)
			\frac{
			    \left\{ ({\trace}^i,\class_i) \in \Sample \middle| {\trace}^i \satisfies \formula_v \right\}
			}{|\Sample|}
			$
			
		}
    }
	\Return $\formula_v$\;
	
%	G{n is a local variable, k is an input}
	
	\caption{TLI}
	\label{alg:MaxSMT}
\end{algorithm}

Algorithm \ref{alg:MaxSMT} presents similarities with Algorithm \ref{alg:RobustMaxSMT} in its structure.
We cover the key differences of Algorithm \ref{alg:MaxSMT} here.

We define propositional formulas  $\semanticConst{\specDepth}{\trace}$ for each trajectories $\trace$ that tracks the valuation of the STL formula encoded by $\propFormula^{DAG}_{\specDepth}$ on $\trace$.
These formulas are built using variables $\y{i}{j}{\trace}$, where $i\in\{1,\ldots,\specDepth\}$ and $j\in\{1,\ldots,\abs{\zeta}-1\}$, that corresponds to the value of $\valFuncPos{\formula_i}{\trace}{j}$ ($\formula_i$ is the STL formula rooted at Node~$i$).

We now define the constraint that ensure consistency with the sample $\propFormula^{\Sample}_{\specDepth}$:
\begin{align}\label{eq:consistency-constraints}
    \propFormula^{\Sample}_{\specDepth} =
    \bigland\limits_{(\trace, \class)\in\Traces} \semanticConst{\specDepth}{\trace} \land
    \bigland\limits_{(\trace,\posclass)\in\Sample} \y{\specDepth}{0}{\trace} \land
    \bigland\limits_{(\trace,\negclass)\in\Sample} \lnot\y{\specDepth}{0}{\trace}
\end{align}
\noindent Each of the $\y{\specDepth}{0}{\trace}$ and $\lnot\y{\specDepth}{0}{\trace}$ are soft constraints (for sake of simplicity, each one is attributed a weight of $1$); all the other constraints are hard constraints.

The previously defined soft constraints aims at correctly classifying a maximum number of trajectories in $\Sample$.
We introduce a new stopping criteria that is triggered
when the percentage of correctly classified trajectories exceeds a given threshold $\minclassification \in [0,1]$.

\paragraph{Decision Tree Variant of \SamplingMaxSMT{} (\SamplingMaxSMTDT{})}

\begin{algorithm}[th] % alg:MaxSMTDT
    \small
	\DontPrintSemicolon
	
	\Input{
	    Sample $\Sample=\{({\trace}^i,\class_i)\}^{N_D}_{i=1}$ \newline
	    Minimum classification $\minclassification \in [0,1]$ \newline %\JRG{default=1}
	    Maximum iteration $\maxiteration \in \nat^{+}$ %\JRG{default=+infinity}
	}
	%\Parameter{ Stopping criteria $\stopCriteria$}
	\BlankLine
	
	$\formula\gets$ Algorithm \ref{alg:MaxSMT} $( \Sample, \minclassification, \maxiteration )$
	\label{alg:MaxSMTDT:line:infer}\;
	
	Split $\Sample$ into $\Sample^{+}$, $\Sample^{-}$ using $\formula$
	\label{alg:MaxSMTDT:line:split_sample}\;
	
	\uIf{$\stopCriteria(\Sample^{+},\Sample^{-})$\label{line:stop-criteria}}
		{
			\Return{$\leaf(\Sample)$}
		}
	\Else{
		
		$\DT_1\gets$ Algorithm \ref{alg:MaxSMTDT} $(\Sample^{+}, \minclassification, \maxiteration )$
		\label{alg:MaxSMTDT:line:rec_acc}\;
		
		$\DT_2\gets$ Algorithm \ref{alg:MaxSMTDT} $(\Sample^{-}, \minclassification, \maxiteration )$
		\label{alg:MaxSMTDT:line:rec_rej}\;
		
		\Return{decision tree with root node $\formula$ and subtrees $\DT_{1}$, $\DT_{2}$}
		\;
		}
	\caption{TLI-DT}
	\label{alg:MaxSMTDT}
\end{algorithm}

Algorithm \ref{alg:MaxSMTDT} is very similar to Algorithm \ref{alg:RobustMaxSMTDT}.
%The key difference is that we use a minimum classification criteria instead of a minimum robustness margin criteria.
We split $\Sample$ into accepted and rejected samples as in \eqref{eq:split-accsample} and \eqref{eq:split-rejsample} at line \ref{alg:MaxSMTDT:line:split_sample}:
\begin{align}
    \Sample^{+} ={}&
    \left\{
	    \left( {\trace}, \class \right) \in \Sample
	    \middle|
	    {\trace} \models \formula
	\right\}
	\label{eq:split-accsample}
	\\
	\Sample^{-} ={}&
    \left\{
	    \left( {\trace}, \class \right) \in \Sample
	    \middle|
	    {\trace} \not\models \formula
	\right\}
	\label{eq:split-rejsample}
\end{align}

\end{document}